\documentclass{article} 
\usepackage{iclr2025_conference_original,times}


\usepackage{enumitem}
\setlist{leftmargin=5.5mm}
\usepackage{xcolor}
\usepackage{mkolar_ICLR}
\definecolor{darkgreen}{rgb}{0,0.5,0}
\definecolor{darkred}{rgb}{0.7,0,0}
\definecolor{teal}{rgb}{0.3,0.8,0.8}
\definecolor{orange}{rgb}{1.0,0.5,0.0}
\definecolor{purple}{rgb}{0.8,0.0,0.8}
\definecolor{Gray}{gray}{0.9}


\newcommand{\alg}{\textbf{DRAKES}}

\newcommand{\data}{\mathrm{data}}
\newcommand{\pre}{\mathrm{pre}}


\renewcommand{\arraystretch}{2.}

\definecolor{darkgoldenrod}{rgb}{0.72, 0.53, 0.04}


\newcommand{\rebuttal}[1]{\noindent{\textcolor{black}{ #1}}}
\newcommand{\cameraready}[1]{\noindent{\textcolor{black}{ #1}}}


\definecolor{Gray}{gray}{0.85}
\usepackage{wrapfig}
\usepackage{hyperref}
\usepackage{url}
\usepackage{xcolor}
\usepackage{booktabs}       %
\usepackage{amsfonts}       %
\usepackage{nicefrac}       %
\usepackage{microtype}      %
\usepackage{xspace}
\usepackage{multirow}
\usepackage{amsmath}
\usepackage{algorithm}
\usepackage{algorithmic}
\usepackage{color}
\usepackage{color, colortbl}
\usepackage{enumitem}
\usepackage{comment}
\usepackage{bm}
\usepackage{subcaption}
\usepackage{thmtools}
\usepackage{url}
\usepackage{csquotes}
\usepackage{tikz}
\usepackage{adjustbox}


\usepackage{centernot}

\newcount\Comments  %
\Comments=1 %
\definecolor{darkgreen}{rgb}{0,0.5,0}
\definecolor{darkred}{rgb}{0.7,0,0}
\definecolor{teal}{rgb}{0.3,0.8,0.8}
\definecolor{orange}{rgb}{1.0,0.5,0.0}
\definecolor{purple}{rgb}{0.8,0.0,0.8}
\newcommand{\kibitz}[2]{\ifnum\Comments=1{\textcolor{#1}{\textsf{\footnotesize #2}}}\fi}
\usetikzlibrary{positioning}

\usepackage{hyperref}
\usepackage{url}

\title{{Fine-Tuning Discrete Diffusion Models via Reward Optimization with Applications to DNA and Protein Design}}

\author{
\textbf{Chenyu Wang\thanks{Equal contribution: wangchy@mit.edu, uehara.masatoshi@gene.com}$\;\,$\thanks{Work mainly done during an internship at Genentech.}$\;\,^{1}$, Masatoshi Uehara\footnotemark[1]$\;\,^{2}$, Yichun He\footnotemark[2]$\;\,^3$, Amy Wang$^2$, Avantika Lal$^2$,} \\[-5pt]
\textbf{\ Tommi Jaakkola$^{1}$, Sergey Levine$^{4}$, Aviv Regev$^{2}$, Hanchen Wang\thanks{Corresponding authors: wang.hanchen@gene.com, biancalt@gene.com}$\;\,^{2,5}$,  Tommaso Biancalani\footnotemark[3]$\;\,^2$}\\
\ $^1$MIT \ $^2$Genentech \ $^3$Harvard University \ $^4$UC Berkeley \ $^5$Stanford University
}

\iclrfinalcopy 
\begin{document}

\maketitle

\begin{abstract}
Recent studies have demonstrated the strong empirical performance of diffusion models on discrete sequences (i.e., discrete diffusion models) across domains from natural language to biological sequence generation. For example, in the protein inverse folding task, where the goal is to generate a protein sequence from a given backbone structure, conditional diffusion models have achieved impressive results in generating natural-like sequences that fold back into the original structure. However, practical design tasks often require not only modeling a conditional distribution but also optimizing specific task objectives. For instance, in the inverse folding task, we may prefer protein sequences with high stability. To address this, we consider the scenario where we have pre-trained discrete diffusion models that can generate natural-like sequences, as well as reward models that map sequences to task objectives. We then formulate the reward maximization problem within discrete diffusion models, analogous to reinforcement learning (RL), while minimizing the KL divergence against pretrained diffusion models to preserve naturalness. To solve this RL problem, we propose a novel algorithm, \alg, that enables direct backpropagation of rewards through entire trajectories generated by diffusion models,
by making the originally non-differentiable trajectories differentiable using the Gumbel-Softmax trick.
Our theoretical analysis indicates that our approach can generate sequences that are both natural-like (i.e., have a high probability under a pretrained model) and yield high rewards.
While similar tasks have been recently explored in diffusion models for continuous domains, our work addresses unique algorithmic and theoretical challenges specific to discrete diffusion models, which arise from their foundation in continuous-time Markov chains rather than Brownian motion. Finally, we demonstrate the effectiveness of our algorithm in generating DNA and protein sequences that optimize enhancer activity and protein stability, respectively, important tasks for gene therapies and protein-based therapeutics. 
The code is available at \url{https://github.com/ChenyuWang-Monica/DRAKES}.  
\end{abstract}

\vspace{-2mm}
\section{Introduction}
\vspace{-2mm}
Diffusion models have gained widespread recognition as effective generative models in continuous spaces, such as image and video generation \citep{song2020denoising,ho2022video}. Inspired by seminal works (e.g., \citet{austin2021structured,campbell2022continuous,sun2022score}), recent studies \citep{lou2023discrete,shi2024diffusion,sahoo2024simple} have shown that diffusion models are also highly effective in discrete spaces, including natural language and biological sequence generation (DNA, RNA, proteins). Unlike autoregressive models commonly used in language modeling, diffusion models are particularly well-suited for biological sequences, where long-range interactions are crucial for the physical behavior of molecules arising from those sequences (e.g., the 3D folded structure of RNA or proteins).

While discrete diffusion models effectively capture conditional distributions (e.g., the distribution of sequences given a specific backbone structure in an inverse protein folding design problem \citep{dauparas2022robust,campbell2024generative}), in many applications, especially for therapeutic discovery, we often aim to generate sequences that are both natural-like and optimize a downstream performance objective. For instance, in the inverse folding problem, we may prefer stable protein sequences (i.e., sequences that fold back into stable protein conformations \citep{widatalla2024aligning}); for mRNA vaccine production we desire 5' UTRs that drive high translational efficiency \citep{castillo2021machine}; for gene and cell therapies, we desire regulatory DNA elements, such as promoters and enhancers, that drive high gene expression only in specific cell types \citep{taskiran2024cell}; and for natural language we optimize to minimize harmfulness \citep{touvron2023llama}.

To address these challenges, our work introduces a fine-tuning approach for well-pretrained discrete diffusion models that maximizes downstream reward functions. Specifically, we aim to optimize these reward functions while ensuring that the generated sequences maintain a high probability under the original conditional distribution (e.g., the distribution of sequences that fold into a given backbone structure). To achieve this, we formulate the problem as a reward maximization task, analogous to reinforcement learning (RL), where the objective function integrates both the reward terms and the KL divergence with respect to the pre-trained discrete diffusion model, which ensures that the generated sequences remain close to the pre-trained model, preserving their naturalness after fine-tuning.  To solve this RL problem, we propose a novel algorithm, \alg, that enables direct backpropagation of rewards through entire trajectories by making the originally non-differentiable trajectories differentiable using the Gumbel-Softmax trick \rebuttal{\citep{jang2016categorical,maddison2016concrete}}.

Our main contribution is an RL-based fine-tuning algorithm, Direct Reward bAcKpropagation with gumbEl Softmax trick (\alg), that enables reward-maximizing finetuning for discrete diffusion models (\pref{fig:summary}). We derive a theoretical guarantee that demonstrates its ability to generate \emph{natural} and \emph{high-reward} designs, and demonstrate its performance empirically on DNA and protein design tasks. While similar algorithms exist for continuous spaces \rebuttal{\citep{fan2023dpok,black2023training,uehara2024finetuning,venkatraman2024amortizing,yuan2023reward,guo2024gradient}}, our work is the first, to the best of our knowledge,  to address these aspects in (continuous-time) discrete diffusion models. This requires addressing unique challenges, as discrete diffusion models are formulated as continuous-time Markov chains (CTMC), which differ from Brownian motion, and the induced trajectories from CTMC are no longer differentiable, unlike in continuous spaces. Our novel theoretical guarantee also establishes a connection with recent advancements in classifier guidance for discrete diffusion models \citep{nisonoff2024unlocking}.

\begin{figure}[!t]
    \centering
    \vspace{-1mm}\includegraphics[width=0.97\linewidth]{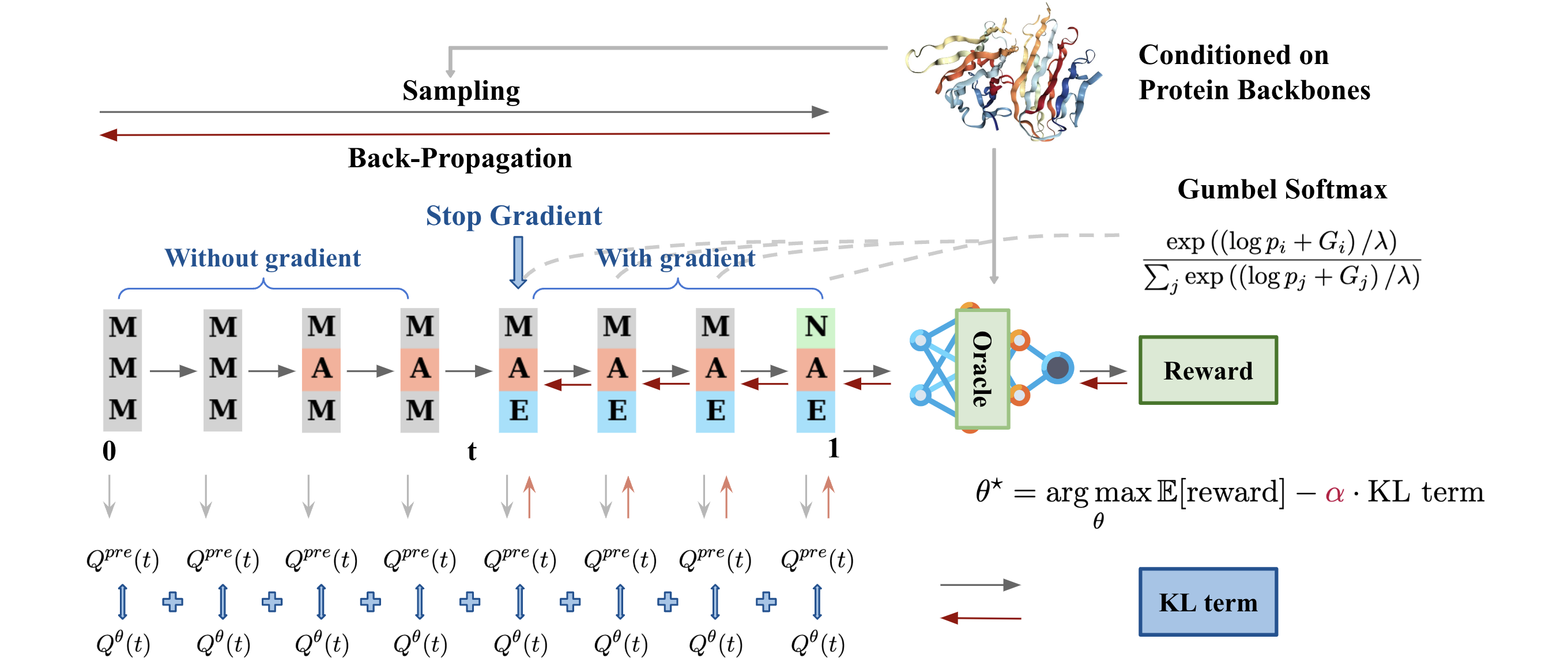}
    \caption{\alg. We maximize the reward with a penalty term relative to pre-trained discrete diffusion models using the Gumbel-Softmax trick.}
    \label{fig:summary}
    \vspace{-6mm}
\end{figure}

\vspace{-2mm}
\section{Related Works}
\vspace{-2mm}

\label{sec:related_works}

\paragraph{Discrete diffusion models and their application in biology.} Building on the seminal works of \citet{austin2021structured,campbell2022continuous}, recent studies on masked diffusion models \citep{lou2023discrete,shi2024diffusion,sahoo2024simple} have demonstrated strong performance in natural language generation. Recent advances in masked discrete diffusion models have been successfully applied to biological sequence generation, including DNA and protein sequences \citep{sarkar2024designing,campbell2024generative}. Compared to autoregressive models, diffusion models may be particularly well-suited for biological sequences, which typically yield molecules that fold into complex three-dimensional (3D) structures.
In contrast to these works, our study focuses on fine-tuning diffusion models to optimize downstream reward functions. One application of our approach is the fine-tuning of protein inverse folding generative models to optimize stability, as discussed in \citet{widatalla2024aligning}. However, unlike this prior work, we employ discrete diffusion models as the generative model.

\vspace{-2mm}
\paragraph{Controlled generation in diffusion models.} There are three primary approaches:
\vspace{-5pt}
\begin{itemize}[leftmargin=*]
\vspace{-2pt}
    \item { \textbf{Guidance}: Techniques such as classifier guidance \citep{song2020score,dhariwal2021diffusion} and its variants (e.g., \citet{bansal2023universal,chung2022diffusion,ho2022video}) introduce gradients from proxy models during inference. However, since gradients are not formally well-defined for discrete states in diffusion, a recent study \citep{nisonoff2024unlocking} proposed a method specifically designed for discrete diffusion models. Alternative approaches directly applicable to discrete diffusion models include sequential Monte Carlo (SMC)-based methods \citep{wu2024practical,trippe2022diffusion,dou2024diffusion,cardoso2023monte,phillips2024particle}. While these guidance-based inference techniques have their own advantages, they generally lead to longer inference times compared to fine-tuned models. We compare our methods against these in terms of generation quality in \pref{sec:experiments}.} 
\vspace{-2pt}
    \item \textbf{RL-based fine-tuning}: To maximize reward functions for pretrained diffusion models, numerous recent studies have explored RL-based fine-tuning in continuous diffusion models (i.e., diffusion models for continuous objectives) \citep{fan2023dpok,black2023training,clark2023directly,prabhudesai2023aligning}. Our work, in contrast, focuses on discrete diffusion models.
\vspace{-2pt}
    \item \textbf{Classifier-free fine-tuning \citep{ho2022classifier}}: This approach constructs conditional generative models, applicable in our setting by conditioning on high reward values.  Although not originally designed as a fine-tuning method, it can also be adapted for fine-tuning \citep{zhang2023adding} by adding further controls to optimize. However, in the context of continuous diffusion models, compared to RL-based fine-tuning, several works \citep{uehara2024bridging} have shown that conditioning on high reward values is suboptimal, because such high-reward samples are rare. We will likewise compare this approach to ours in \pref{sec:experiments}. Lastly, when pretrained models are conditional diffusion models (i.e., {\small$p(x|c)$}) and the offline dataset size consisting of triplets {\small$(c, x, r(x))$} is limited, it is challenging to achieve success. Indeed, for this reason, most current RL-based fine-tuning papers (e.g., \citet{fan2023dpok,black2023training,clark2023directly}) do not empirically compare their algorithms with classifier-free guidance.    
\end{itemize}

\vspace{-2mm}
\section{Preliminary}
\vspace{-2mm}

\subsection{Diffusion Models on Discrete Spaces}

In diffusion models, our goal is to model the data distribution $p_{\data} \in \Delta(\Xcal)$ using the training data, where $\Xcal$ represents the domain. We focus on the case where $\Xcal=\{1,2,\cdots, N\}$. The fundamental principle is (1) introducing a known forward model that maps the data distribution to a noise distribution, and (2) learning the time reversal that maps the noise distribution back to the data distribution (detailed in  \citet{lou2023discrete,sahoo2024simple,shi2024simplified}).

First, we consider the family of distributions $j_t \in \RR^{N}$ (a vector summing to 1) that evolves from $t=0$ to $t=T$ according to a continuous-time Markov chain (CTMC): 
{\small\begin{align} \textstyle 
\frac{dj_t}{dt} = Q(t) j_t,\quad p_0\sim p_{\data}, 
\end{align} }
\vspace{-10pt}

where {\small$Q(t) \in \RR^{N\times N}$} is the generator. 
Generally, $j_t$ is designed so that $p_t$ approaches a simple limiting distribution at $t=T$. A common approach is to add \textit{Mask} into $\Xcal$ and gradually mask a sequence so that the limiting distribution becomes completely masked \citep{shi2024simplified,sahoo2024simple}.

Next, we consider the time-reversal CTMC \citep{sun2022score} that preserves the marginal distribution. This can be expressed as follows: 
\vspace{-5pt}
{\small\begin{align}\textstyle 
    \frac{dj_{T-t}}{dt} = \bar Q(T-t)j_{T-t},\quad \bar Q_{x,y}(t) = \begin{cases}
     \textstyle    \frac{j_t(y)}{j_t(x)} Q_{y,x}(t)\,(y \neq x) \\ 
    \textstyle     -\sum_{y\neq x} \bar Q_{x,y}(t)\,(y=x), 
    \end{cases} 
\end{align}}

where $Q_{x,y}(t)$ is a $(x,y)$-entry of a generator $Q(t)$, \rebuttal{representing the transition rate matrix from state $x$ to state $y$}. 
This implies that if we can learn the marginal density ratio $p_t(y)/p_t(x)$, we can sample from the data distribution at $t=T$ by following the above CTMC controlled by $\bar Q(T-t)$. Existing works (e.g., \citet{lou2023discrete}) demonstrate how to train this ratio from the training data. Especially when we use masked diffusion models \citep{sahoo2024simple,shi2024simplified}, we get 
{\small\begin{align}\textstyle 
\label{eq:maskdiff}
   \bar Q_{x,y}(t)=\begin{cases}
        \gamma \EE[x_0=y|x_t=\mathrm{Mask}]  \quad (y\neq \mathrm{Mask}, x_t = \mathrm{Mask}) \textstyle,  \\
     - \sum_{z\neq  \mathrm{Mask}} \gamma \EE[x_0=z|x_t=\mathrm{Mask}]\quad (y = \mathrm{Mask}, x_t = \mathrm{Mask})  \\
        0 \quad  ( x_t \neq \mathrm{Mask}) \textstyle
    \end{cases},  
\end{align}}
\vspace{-10pt}

for a certain constant $\gamma$, where the expectation is taken with respect to (w.r.t.) the distribution induced by the forward CTMC. Notably, the above formulation suggests that masked diffusion models could be viewed as a hierarchical extension of BERT \citep{devlin2018bert}.

\begin{remark}[Sequence of multiple tokens]
When dealing with sequences of length $M$, $x=[x^{\langle 1 \rangle},\cdots,x^{\langle M \rangle}] $, we simply consider the factorized rate matrix, i.e., $\bar Q_{x,y}=\sum_i  \bar Q_{x,y^{\langle i \rangle}}$ \citep{campbell2022continuous}, thereby avoiding exponential blowup.
\end{remark}

\begin{remark}[Conditioning]
We can easily construct a conditional generative model for any $c \in \Ccal$ by allowing the generator to be a function of $c \in \Ccal$.
\end{remark}

\subsection{Goal: Generating Natural Samples While Optimizing Reward Functions}
\label{sec:goal}

In our work, we consider a scenario with a pretrained \rebuttal{masked discrete diffusion model} $p^{\pre}(x|c)\in [\Ccal \to \Delta(\Xcal)]$ trained on an extensive dataset and a downstream reward function $r:\Xcal \to \RR$. The pretrained diffusion model captures the naturalness or validity of samples. For example, in protein design, $p^{\pre}(\cdot|\cdot)$ could be a protein inverse-folding model that generates amino acid sequences that fold back into the given backbone structure (similar to \cite{campbell2024generative}), and $r$ could be a function that evaluates stability. Our objective is to fine-tune a generative model to generate \textit{natural-like} samples (high $\log p^{\pre}(\cdot|\cdot)$) with desirable properties (high $r(\cdot)$). 

\vspace{-2mm}
\paragraph{Notation.} 
\vspace{-2mm}
We introduce a discrete diffusion model parameterized by $\theta$ from $t=0$ to $t=T$\footnote{\rebuttal{Starting from \pref{sec:goal}, to simply the notation, we go from $t=0$ to $t=T$ to represent noise to data.}}: 
{\small\begin{align}\label{eq:discrete} \textstyle 
\frac{dp_t}{d t} = Q^{\theta}(t)p_t, \quad p_0 = p_{\lim}. 
\end{align} }
\vspace{-15pt}

The parameter $\theta$ from the pretrained model is denoted by $\theta_{\pre}$ \rebuttal{and $p_\text{lim}$ denotes the initial distribution}. The distribution at time $T$ is denoted as $p^{\pre}(\cdot)$, which approximates the training data distribution $p^{\data}$. We denote an element of the generated trajectory from $t=0$ to $t=T$ by $x_{0:T}$. For simplicity, we assume the initial distribution is a Dirac delta distribution (completely masked state), and we often treat the original pretrained diffusion model as an unconditional model for a single token for notational convenience. In this paper, all of the proofs are in \pref{app:proof}. \cameraready{We provide an extension to the case when the initial distributions are stochastic in \pref{app:initial}.}

\vspace{-2mm}
\section{Algorithm}
\vspace{-2mm}

In this section, we present our proposed method, \alg, for fine-tuning diffusion models to optimize downstream reward functions. We begin by discussing the motivation behind our algorithm.

\vspace{-2mm}
\subsection{Key Formulation}\label{sec:key}
\vspace{-2mm}

Perhaps the most obvious starting point for fine-tuning diffusion models to maximize a reward function $r(x_T)$ is to simply maximize the expected value of the reward under the model's distribution, i.e., $\EE_{x_{0
} \sim P^{\theta} } \left [r(x_T)\right]$, where the expectation is taken over the distribution $P^{\theta}(x_{0:T})$ induced by \eqref{eq:discrete} (i.e., the generator $Q^{\theta}$). However, using only this objective could lead to over-optimization, where the model produces unrealistic or unnatural samples that technically achieve a high reward, but are impossible to generate in reality. Such samples typically exploit flaws in the reward function, for example, by being outside the training distribution of a learned reward or violating the physical assumptions of a hand-engineered physics-based reward \citep{levine2020offline,clark2023directly,uehara2024bridging}.
We address this challenge by constraining the optimized model to remain close to a pretrained diffusion model, which captures the distribution over natural or realistic samples. More specifically, we introduce a penalization term by incorporating the KL divergence between the fine-tuned model $P^{\theta}(x_{0:T})$ and the pretrained diffusion model $P^{\theta_{\pre}}(x_{0:T})$ in CTMC. 

Accordingly, our goal during fine-tuning is to solve the following reinforcement learning (RL) problem:
\vspace{-5pt}
{\small\begin{align}\label{eq:original}
  \theta^{\star} \textstyle &=  \argmax_{\theta \in \Theta}  \underbrace{\EE_{x_{0:T} \sim P^{\theta} } \left [r(x_T)\right]}_{\textrm{Reward term} }   \\
  \textstyle &- \alpha \underbrace{ \EE_{x_{0:T} \sim P^{\theta} } \left [\int_{t=0}^T \sum_{y \neq x_t}\left\{Q^{\theta_{\pre}}_{x_t,y}(t)-Q^{\theta}_{x_t,y}(t)+ Q^{\theta}_{x_t,y}(t)\log \frac{Q^{\theta}_{x_t,y}(t) }{Q^{\theta_{\pre} }_{x_t,y}(t)}\right\} dt \right] }_{\textrm{KL term}}. \nonumber
\end{align}}
\vspace{-10pt}

The first term is designed to generate samples with desired properties, while the second term represents the KL divergence. The parameter $\alpha$ controls the strength of this regularization term.

Finally, after fine-tuning, by using the following CTMC from $t=0$ to $t=T$: 
{\small\begin{align}\label{eq:after_finetuned}\textstyle 
   \frac{dp_t}{d t} = Q^{\theta^{\star}}(t) p_t, \quad p_0 = p_{\lim}. 
\end{align}}

we generate samples at time $T$. Interestingly, we can show the following. 
\begin{theorem}[Fine-Tuned Distribution]\label{thm:key}
When $\{Q^{\theta}_{\cdot,\cdot}:\theta \in \Theta \}$ is fully nonparametric (i.e., realizability holds), the generated distribution at time $T$ by \eqref{eq:after_finetuned} is proportional to 
{\small\begin{align}
   \exp(r(\cdot)/\alpha) p^{\pre}(\cdot). 
\end{align}}
\end{theorem}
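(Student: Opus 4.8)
The plan is to recognize the objective in \eqref{eq:original} as, up to an additive constant and an overall negative sign, a single KL divergence between path measures, and then to combine realizability with a Doob $h$-transform to read off the optimizer's terminal marginal.

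First I would identify the bracketed penalty as a genuine path-space KL. The two processes $P^{\theta}$ and $P^{\theta_{\pre}}$ are CTMCs on $\Xcal$ sharing the same initial law $p_{\lim}$, so by the Radon--Nikodym derivative for pure-jump processes (a Girsanov-type formula, in which the jump counts $dN^y_t$ have compensator $Q^{\theta}_{x_t,y}(t)\,dt$ under $P^{\theta}$), one obtains
\[
\KL(P^{\theta}\|P^{\theta_{\pre}}) = \EE_{x_{0:T}\sim P^{\theta}}\!\left[\int_{0}^{T}\sum_{y\neq x_t}\!\left\{Q^{\theta_{\pre}}_{x_t,y}(t)-Q^{\theta}_{x_t,y}(t)+Q^{\theta}_{x_t,y}(t)\log\frac{Q^{\theta}_{x_t,y}(t)}{Q^{\theta_{\pre}}_{x_t,y}(t)}\right\}dt\right],
\]
which is exactly the KL term. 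Thus \eqref{eq:original} reads $\EE_{P^{\theta}}[r(x_T)]-\alpha\,\KL(P^{\theta}\|P^{\theta_{\pre}})$.

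Second I would complete the square into one divergence. Define the endpoint-tilted target path measure
\[
P^{\star}(x_{0:T})\defeq \frac{1}{Z}\,P^{\theta_{\pre}}(x_{0:T})\,\exp\!\big(r(x_T)/\alpha\big),\qquad Z\defeq \EE_{P^{\theta_{\pre}}}\!\big[\exp(r(x_T)/\alpha)\big].
\]
A one-line expansion of $\EE_{P^{\theta}}[\log(P^{\theta}/P^{\star})]$ gives $\KL(P^{\theta}\|P^{\star})=\KL(P^{\theta}\|P^{\theta_{\pre}})-\tfrac{1}{\alpha}\EE_{P^{\theta}}[r(x_T)]+\log Z$, so the objective equals $-\alpha\,\KL(P^{\theta}\|P^{\star})+\alpha\log Z$. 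Since $\log Z$ is independent of $\theta$, maximizing \eqref{eq:original} is equivalent to minimizing $\KL(P^{\theta}\|P^{\star})$, whose minimum value $0$ is attained precisely when $P^{\theta}=P^{\star}$.

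Third I would show this minimizer is in-class and extract its marginal. The crucial point is that $P^{\star}$, though defined by tilting only the terminal coordinate, is itself the law of a CTMC: with the space-time harmonic function $h(x,t)\defeq\EE_{P^{\theta_{\pre}}}[\exp(r(x_T)/\alpha)\mid x_t=x]$, the measure $P^{\star}$ is a Doob $h$-transform of $P^{\theta_{\pre}}$, generated by $Q^{\star}_{x,y}(t)=\tfrac{h(y,t)}{h(x,t)}Q^{\theta_{\pre}}_{x,y}(t)$ for $y\neq x$ (with the diagonal reset so each row sums to zero), which one checks is a bona fide generator. Under realizability there is therefore $\theta^{\star}\in\Theta$ with $Q^{\theta^{\star}}=Q^{\star}$, so $P^{\theta^{\star}}=P^{\star}$ and $\KL(P^{\theta^{\star}}\|P^{\star})=0$. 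Marginalizing $P^{\star}$ to time $T$ integrates out $x_{0:T^{-}}$ against $P^{\theta_{\pre}}$ and, since the terminal marginal of $P^{\theta_{\pre}}$ is $p^{\pre}(\cdot)$, leaves the generated distribution at time $T$ proportional to $\exp(r(\cdot)/\alpha)\,p^{\pre}(\cdot)$, as claimed.

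I expect the entire subtlety to lie in the third step: verifying that exponential tilting of the endpoint preserves the Markov/CTMC structure (the $h$-transform) and that the transformed generator is well defined and lies in the nonparametric class, so that the infimum of $\KL(P^{\theta}\|P^{\star})$ over the family is genuinely $0$ rather than merely approached. The first two steps are then routine, resting only on the standard continuous-time KL formula and an elementary regrouping of terms.
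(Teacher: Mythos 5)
Your proof is correct, but it reaches the conclusion by a genuinely different route from the paper. You work globally on path space: you identify the penalty as $\KL(P^{\theta}\,\|\,P^{\theta_{\pre}})$ via the Girsanov/Radon--Nikodym formula for pure-jump processes, complete the square to rewrite the objective as $-\alpha\,\KL(P^{\theta}\,\|\,P^{\star})+\alpha\log Z$ with $P^{\star}\propto P^{\theta_{\pre}}\exp(r(x_T)/\alpha)$, and then invoke the Doob $h$-transform to certify that $P^{\star}$ is itself a CTMC (hence attainable under realizability), after which the terminal marginal $\propto\exp(r(\cdot)/\alpha)p^{\pre}(\cdot)$ is immediate because the tilt touches only $x_T$. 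The paper instead argues locally by dynamic programming: it defines the value function $V_t$, derives the HJB equation to get the optimal generator $Q^{\theta^{\star}}_{x,y}(t)=Q^{\theta_{\pre}}_{x,y}(t)\exp(\{V_t(y)-V_t(x)\}/\alpha)$ (\pref{thm:optimal_control}), identifies $\exp(V_t/\alpha)$ via Feynman--Kac (\pref{thm:Feynman}), and then verifies by direct computation that $p^{\star}_t\propto\exp(V_t(\cdot)/\alpha)p^{\pre}_t(\cdot)$ solves the Kolmogorov forward equation (\pref{thm:primary}). The two derivations meet in the middle --- your harmonic function $h(x,t)$ is exactly the paper's $\exp(V_t(x)/\alpha)$ and your transformed generator is exactly its optimal generator --- but they buy different things. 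Your argument is shorter and makes the Doob-transform structure (which the paper only remarks on afterwards) the engine of the proof, and it gets the time-$T$ marginal for free without verifying a forward equation; its cost is reliance on the path-space likelihood-ratio formula as a black box, whereas the paper works entirely from the elementary Kolmogorov equations it proves in the appendix, and in doing so obtains the stronger intermediate statement about the marginal at every time $t$, which it then reuses to connect to classifier guidance. One point you should make explicit: for $P^{\theta^{\star}}=P^{\star}$ to be achievable by changing only the generator, the initial marginal of $P^{\star}$, which is proportional to $h(\cdot,0)p_{\lim}(\cdot)$, must equal $p_{\lim}$; this holds here because the paper takes $p_{\lim}$ to be a Dirac delta at the fully masked state, and it is precisely the point that forces the paper to also reparameterize the initial distribution in \pref{app:initial} when $p_{\lim}$ is stochastic.
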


This theorem offers valuable insights. The first term, $\exp(r(x))$, represents high rewards. 
Additionally, the second term, $p^{\pre}(\cdot)$, can be seen as prior information that characterizes the natural sequence. For example, in the context of inverse protein folding, this refers to the ability to fold back into the target backbone structure.

\begin{remark}
 A similar theorem has been derived for continuous diffusion models \citep[Theorem 1]{uehara2024finetuning}. 
However, our formulation \eqref{eq:original} differs significantly as our framework is based on a CTMC, whereas those works are centered around the Brownian motion. Furthermore, while the use of a similar distribution is common in the literature on (autoregressive) large language models (e.g., \citet{ziegler2019fine}), its application in discrete diffusion models is novel, considering that $p^{\pre}(\cdot)$ cannot be explicitly obtained in our context, unlike autoregressive models.  
\end{remark}

\subsection{Direct Reward Backpropagation with Gumbel Softmax Trick (\alg)}
\label{sec:alg}

Based on the key formulation presented in \pref{sec:key}, we introduce our proposed method (\pref{alg:XXX} and \pref{fig:summary}), which is designed to solve the RL problem \eqref{eq:original}. The core approach involves iteratively (a) sampling from  $x_{0:T}\sim P^{\theta}$ and (b) updating $\theta$ by approximating the objective function \eqref{eq:original} with its empirical counterpart and adding its gradient with respect to $\theta$ into the current $\theta$.  Importantly, for step (b) to be valid, step (a) must retain the gradients from $\theta$. After explaining the representation of $x_t$, we will provide details on each step.

\begin{algorithm}[!th]
\caption{\alg\,(Direct Reward bAcKpropagation with gumbEl Softmax trick)  }\label{alg:XXX}
\begin{algorithmic}[1]
     \STATE {\bf Require}:  Pretrained diffusion models $Q^{\theta_{\pre}}:\RR^N\times \RR^N \to \RR$, reward $r:\Xcal \to \RR$, learning rate $\beta$, Batch size $B$, Iteration $S$, Time-step $\Delta t$, Temperature $\tau$, Regularization parameter $\alpha$
     \FOR{$s \in [1,\cdots,S]$} \label{line:data1}
             \FOR{$i \in [1,\cdots,B]$}
      \FOR{$t \in [0,\Delta t,\cdots,T]$}
  \vspace{-5pt}
    \STATE Set {\small$[\pi(t)_1,\cdots,\pi(t)_N] \in \Delta(\Xcal)$} where 
    {\small$
    \pi(t)_y=\begin{cases}
        [\bar x^{(i)}_{t-1}]_y + \Delta t \sum_{x \in \mathcal{X}} [\bar x^{(i)}_{t-1}]_x Q^{\theta_s}_{y, x } \, (t>0)\\
        p_{\text{lim}}(y) \, (t=0)
    \end{cases}
  \vspace{-6pt}
    $}
      \STATE Sample {\small$k \in [1,\cdots,N];  G_k \sim \mathrm{Gumbel}(0,1)$}    
\STATE Set a differentiable counterpart of the sample at time $t$: \label{line:gumbel2}
    {\small\begin{align*} 
    \bar x^{(i)}_{t} \leftarrow  \left [ \frac{ \exp( (\pi(t)_1 + G_1)/\tau    ) }{\sum_{y}  \exp( (\pi(t)_y + G_y)/\tau) },\cdots, \frac{ \exp( ( \pi(t)_N + G_N) /\tau    )}{\sum_{y}  \exp( \pi(t)_y + G_y)/\tau) }   \right ]
    \end{align*}}
    \vspace{-15pt}
    \ENDFOR 
    \ENDFOR \label{line:data2}
    \STATE Set the loss: \label{line:opti1} 
  \vspace{-10pt}
    {\small\begin{align*}    g(\theta_s) = \frac{1}{B}\sum_{i=1}^B \left [r(\bar x^{(i)}_T)-  \frac{\alpha}{T}\sum_{t=1}^T \sum_{x\in \Xcal} [\bar x^{(i)}_{t-1}]_x \sum_{\substack{y\in \Xcal\\y \neq x} }\left\{ -Q^{\theta_s}_{x,y}(t)+Q^{\theta_{\pre} }_{x,y}(t)+ Q^{\theta_s}_{x,y}(t)\log \frac{Q^{\theta_s}_{x,y}(t) }{Q^{\theta_{\pre} }_{x,y}(t)}\right\} \right] \end{align*}}
  \vspace{-5pt}
    \STATE Update a parameter: \label{line:opti2} $
   \theta_{s+1} \leftarrow \theta_s + \beta \nabla_{\theta}g(\theta)|_{\theta=\theta_s} 
$
     \ENDFOR 
      \STATE {\bf Output}: $\theta_{S+1}$
\end{algorithmic}
\end{algorithm}
\vspace{-10pt}

\paragraph{Representation.}  To represent $x \in \{1,\cdots,N\}$, we often  use the $N$-dimensional one-hot encoding representation within $\RR^N$ interchangeably. From this perspective, while the original generator corresponds to a map $\Xcal \times \Xcal \to \RR$, we can also regard it as an extended mapping: $\RR^N \times \RR^N \to \RR$. We will use this extended mapping when we consider our algorithm later.

\vspace{-2mm}
\paragraph{Stage 1: Data collection (Step \ref{line:data1}-\ref{line:data2})} 

We aim to sample from the distribution induced by the generator $Q^{\theta}$. In the standard discretization of CTMC, for $(y,x) \in \Xcal \times \Xcal$, at time $t$, we use 
{\small\begin{align} 
p(x_{t+\Delta t}=y|x_t=x)=\mathrm{I}(x=y) + Q^{\theta}_{x,y}(t)(\Delta t).
\end{align}}

Thus, by defining {\small$\pi_t=[Q^{\theta}_{x,1}(t)(\Delta t),\cdots, (1 + Q^{\theta}_{x,x}(t))\Delta t  \cdots, Q^{\theta}_{x,N}(t)(\Delta t) ]$}, 
we sample {\small$x_{t+\Delta t} \sim \mathrm{Cat}(\pi_t)$}, where {\small$\mathrm{Cat}(\cdot)$} denotes the categorical distribution.

However, this procedure is not differentiable with respect to $\theta$, which limits its applicability for optimization. To address this, we first recognize that sampling from the categorical distribution can be reduced to a Gumbel-max operation. Although this operation itself remains non-differentiable, we can modify it by replacing the max operation with a softmax, as shown in Line~\ref{line:gumbel2}, which is also utilized in discrete VAE \citep{jang2016categorical}. This modification results in a new variable, $\bar{x}_t \sim [0,1]^N$, which maintains differentiability with respect to $\theta$. As the temperature $\tau$ approaches zero, $\bar{x}_t$ converges to a sample from the exact categorical distribution $\mathrm{Cat}(\pi_t)$, effectively becoming $x_t$. Thus, we typically set the temperature to a low value to closely approximate the true distribution.

\vspace{-2mm}
\paragraph{Stage 2: Optimization (Step \ref{line:opti1}-\ref{line:opti2})}

After approximately sampling from the distribution induced by $P^{\theta_s}$, we update the parameter $\theta_s$ by maximizing the empirical objective. Although $x_t$ itself may not have a valid gradient, $\bar{x}_t$ retains the gradient with respect to $\theta$. Therefore, we use the empirical approximation based on $\bar{x}_t$. We offer several remarks below, with details in \pref{app:alg}: 
\begin{itemize} [leftmargin=*]
\item \textbf{Validity of $\bar{x}_t$}: While $\bar{x}_t$ does not strictly belong to $\Xcal$, this is practically acceptable since the generator $Q^{\theta}(t)$ is parameterized as a map $\RR^N \times \RR^N \to \RR$. 
\item \textbf{SGD Variants}: Although Line~\ref{line:opti2} uses the standard SGD update, any off-the-shelf SGD algorithm, such as Adam \citep{kingma2014adam}, can be applied in practice. 
\item \textbf{Soft Calculation with $\bar{x}_t$}: Transition probability $\pi(t)_y$ and the KL divergence term in $g(\theta)$ are modified to their soft counterparts by using $\bar{x}_t$ in place of $x_t$.
\item \textbf{Straight-Through Gumbel Softmax}: Non-relaxed computations can be used in the forward pass (in Line~\ref{line:opti1}). This is commonly known as straight-through Gumbel softmax estimator.
\item \textbf{Truncated Backpropagration}: In practice, it is often more effective to backpropagate from intermediate time steps rather than starting from $t=0$. In practice, we adopted this truncation approach, as in \citet{clark2023directly}.
\item \textbf{Optimization Objective $g(\theta)$}: For the masked diffusion models \pref{eq:maskdiff} that we utilized, $g(\theta)$ can be further simplified to reduce computational complexity, as detailed in \pref{app:alg.kl}.
\end{itemize}

\vspace{-2mm}
\section{Theory of \alg} 
\vspace{-2mm}

In this section, we provide an overview of the proof for \pref{thm:key}. Based on the insights gained from this proof, we reinterpret state-of-the-art classifier guidance for discrete diffusion models \citep{nisonoff2024unlocking} from a new perspective. 

\vspace{-2mm}
\subsection{Proof Sketch of \pref{thm:key} }\label{sec:sketch}

We define the optimal value function $V_t:\Xcal \to \RR$ as follows:  
{\small\begin{align}
    \EE_{x_{t:T} \sim P^{\theta^{\star}} }\left [r(x_T) -  \alpha \int_{s=t}^T \sum_{y \neq x_s} \left\{ Q^{\theta^{\star}}_{x_s,y}(s)- Q^{\theta_{\pre} }_{x_s,y}(s)+ Q^{\theta^{\star}}_{x_s,y}(s)\log \frac{Q^{\theta^{\star}}_{x_s,y}(s) }{Q^{\theta_{\pre}}_{x_s,y}(s)} \right\} ds  \mid x_t = x \right ]. 
    \label{eq:value}
\end{align}}

This is the expected return starting from state $x$ at time $t$ following the optimal policy. Once the optimal value function is defined, the optimal generator can be expressed in terms of this value function using the Hamilton-Jacobi-Bellman (HJB) equation in CTMC, as shown below.
\rebuttal{\begin{theorem}[Optimal generator]\label{thm:optimal_control}
For $x\neq y$ ($x,y\in \Xcal$), the optimal CTMC generator $Q^{\theta^{\star}}_{x,y}(t)$ can be expressed by the pretrained CTMC generator $Q^{\theta_{\mathrm{pre}}}_{x,y}(t)$ and the value function~\ref{eq:value}: 
{\small\begin{align}
  Q^{\theta^{\star}}_{x,y}(t) =  Q^{\theta_{\mathrm{pre}}}_{x,y}(t) \exp (\{V_t(y)-V_t(x)\}/\alpha ).  
\end{align}}
\end{theorem}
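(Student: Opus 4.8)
The plan is to treat \eqref{eq:original} as a finite-horizon continuous-time stochastic control problem over CTMC generators and to read the stated formula off the associated Hamilton--Jacobi--Bellman (HJB) optimality condition. Writing the running cost at state $x$ as $c_x(Q) \defeq \sum_{y \neq x}\{Q^{\theta_{\pre}}_{x,y}(t) - Q_{x,y}(t) + Q_{x,y}(t)\log(Q_{x,y}(t)/Q^{\theta_{\pre}}_{x,y}(t))\}$, the value function $V_t$ in \eqref{eq:value} is the optimal cost-to-go with terminal condition $V_T(x) = r(x)$. The first step is the dynamic programming principle: for a small increment $\Delta t$, $V_t(x) = \sup_{Q}\{-\alpha\, c_x(Q)\Delta t + \EE[V_{t+\Delta t}(x_{t+\Delta t}) \mid x_t = x]\} + o(\Delta t)$. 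Expanding the conditional expectation with the one-step transition $\Prr(x_{t+\Delta t} = y \mid x_t = x) = \rI(x=y) + Q_{x,y}(t)\Delta t + o(\Delta t)$ and a first-order Taylor expansion of $V$ in time, then subtracting $V_t(x)$, dividing by $\Delta t$, and letting $\Delta t \to 0$, yields the HJB equation
\[
0 = \partial_t V_t(x) + \sup_{Q_{x,\cdot} \ge 0}\Big\{\textstyle\sum_{y\neq x} Q_{x,y}(t)\,(V_t(y) - V_t(x)) - \alpha\, c_x(Q)\Big\}.
\]

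The key step is the pointwise maximization inside the HJB equation. Because both the generator term $\sum_{y\neq x} Q_{x,y}(V_t(y)-V_t(x))$ and the cost $c_x(Q)$ decompose as sums over target states $y \neq x$, the supremum factorizes into independent scalar problems, one per pair $(x,y) \in \Xcal \times \Xcal$ with $x \neq y$: maximize $h(q) \defeq q\,(V_t(y) - V_t(x)) - \alpha\{Q^{\theta_{\pre}}_{x,y}(t) - q + q\log(q/Q^{\theta_{\pre}}_{x,y}(t))\}$ over $q \ge 0$. Differentiating gives the clean expression $h'(q) = (V_t(y) - V_t(x)) - \alpha\log(q/Q^{\theta_{\pre}}_{x,y}(t))$, so the first-order condition $h'(q)=0$ returns exactly $q^\star = Q^{\theta_{\pre}}_{x,y}(t)\exp(\{V_t(y)-V_t(x)\}/\alpha)$, which is the claimed value of $Q^{\theta^\star}_{x,y}(t)$. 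Since $q\log q$ is strictly convex, $h$ is strictly concave, so this critical point is the unique global maximizer; moreover the exponential form is automatically positive, so the constraint $q \ge 0$ is never active and requires no separate treatment.

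I expect the main obstacle to be the rigorous justification of the HJB step rather than the optimization, which is a one-line calculus exercise. Specifically, one must (i) invoke the fully nonparametric / realizability hypothesis so the generator may be chosen freely and pointwise at every $(x,y,t)$, guaranteeing the per-coordinate maximizer is realized by some $Q^{\theta^\star}$; (ii) verify that $V_t$ is differentiable in $t$ and that the interchange of the supremum with the small-$\Delta t$ expansion is legitimate (standard for finite state spaces with bounded rates, but worth recording); and (iii) close the argument with a verification step, substituting $q^\star$ back into the HJB equation to confirm that the candidate generator attains the supremum and hence is optimal. Finally, the factorization across tokens for length-$M$ sequences carries through coordinatewise, so the single-token derivation transfers to the sequence setting without additional work.
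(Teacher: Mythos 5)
Your proposal is correct and follows essentially the same route as the paper: derive the Hamilton--Jacobi--Bellman equation for the value function \eqref{eq:value} via the dynamic programming principle and the first-order expansion of the CTMC transition kernel, then obtain the optimal generator by pointwise maximization of the resulting concave objective in each off-diagonal entry $Q_{x,y}(t)$, whose first-order condition gives $Q^{\theta^{\star}}_{x,y}(t)=Q^{\theta_{\mathrm{pre}}}_{x,y}(t)\exp(\{V_t(y)-V_t(x)\}/\alpha)$. The paper phrases the last step as a functional derivative under the constraint $\sum_y Q_{x,y}(t)=0$, whereas you observe that the objective decouples into independent strictly concave scalar problems over $y\neq x$ (the constraint only fixes the diagonal entry); this is the same computation.
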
 }

Next, consider an alternative expression for the soft value function, derived using the Kolmogorov backward equations in CTMC. This expression is particularly useful for learning value functions.
\rebuttal{\begin{theorem}[Feynman–Kac Formula in CTMC]\label{thm:Feynman}
The value function~\ref{eq:value} follows the Feynman–Kac formulation with respect to the pretrained CTMC distribution $P^{\theta_{\pre}}$ generated by $Q^{\theta_{\mathrm{pre}}}$ as follows 
{\small\begin{align}
    \exp(V_t(x)/\alpha ) = \EE_{x_{t:T}\sim P^{\theta_{\pre}}}[\exp(r(x_T)/\alpha)|x_t=x]
\end{align}}
\end{theorem}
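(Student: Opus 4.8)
The plan is to characterize the value function \eqref{eq:value} by a Hamilton--Jacobi--Bellman (HJB) ordinary differential equation in $t$, substitute the optimal generator from \pref{thm:optimal_control}, and then apply an exponential (Hopf--Cole) change of variables $\phi_t=\exp(V_t/\alpha)$ that linearizes the resulting nonlinear ODE into a Kolmogorov backward equation; the claimed expectation is then simply its Feynman--Kac solution under $P^{\theta_{\pre}}$. Concretely, I would first apply the dynamic programming principle to \eqref{eq:value} over an infinitesimal interval $[t,t+dt]$. Using the one-step transition $p(x_{t+dt}=y\mid x_t=x)=\rI(x=y)+Q_{x,y}(t)\,dt+o(dt)$ together with the row-sum-zero property $\sum_{y}Q_{x,y}(t)=0$, Bellman optimality gives, after cancelling $V_t(x)$ and dividing by $dt$,
\begin{align*}
0=\partial_t V_t(x)+\sup_{Q}\Big[\sum_{y\neq x}Q_{x,y}(t)\big(V_t(y)-V_t(x)\big)-\alpha\sum_{y\neq x}\big\{Q^{\theta_{\pre}}_{x,y}(t)-Q_{x,y}(t)+Q_{x,y}(t)\log\tfrac{Q_{x,y}(t)}{Q^{\theta_{\pre}}_{x,y}(t)}\big\}\Big],
\end{align*}
with terminal condition $V_T(x)=r(x)$, where the bracketed running penalty is exactly the KL integrand of \eqref{eq:original}.

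Next I would insert the optimal control. The first-order condition in $Q_{x,y}$ of the bracket is $V_t(y)-V_t(x)-\alpha\log(Q_{x,y}/Q^{\theta_{\pre}}_{x,y})=0$, which recovers the maximizer already stated in \pref{thm:optimal_control}, namely $Q^{\theta^{\star}}_{x,y}(t)=Q^{\theta_{\pre}}_{x,y}(t)\exp(\{V_t(y)-V_t(x)\}/\alpha)$. Substituting it back, the terms proportional to $V_t(y)-V_t(x)$ cancel (the penalty's $\log$-term contributes $Q^{\theta^{\star}}_{x,y}(V_t(y)-V_t(x))$, which annihilates the linear gain term) and the HJB equation collapses to the closed nonlinear ODE
\begin{align*}
\partial_t V_t(x)=-\alpha\sum_{y\neq x}Q^{\theta_{\pre}}_{x,y}(t)\Big[\exp\big(\{V_t(y)-V_t(x)\}/\alpha\big)-1\Big].
\end{align*}

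Finally I would linearize. Setting $\phi_t(x)=\exp(V_t(x)/\alpha)$, so that $\partial_t\phi_t(x)=\tfrac{1}{\alpha}\phi_t(x)\,\partial_t V_t(x)$ and $\exp(\{V_t(y)-V_t(x)\}/\alpha)=\phi_t(y)/\phi_t(x)$, the nonlinear ODE becomes the \emph{linear} backward equation
\begin{align*}
\partial_t\phi_t(x)=-\sum_{y\neq x}Q^{\theta_{\pre}}_{x,y}(t)\big(\phi_t(y)-\phi_t(x)\big),\qquad \phi_T(x)=\exp(r(x)/\alpha).
\end{align*}
The right-hand side is precisely $-(\mathcal{L}^{\theta_{\pre}}\phi_t)(x)$, where $\mathcal{L}^{\theta_{\pre}}$ is the generator of the pretrained CTMC acting on functions, so this is the Kolmogorov backward equation whose unique solution is $\phi_t(x)=\EE_{x_{t:T}\sim P^{\theta_{\pre}}}[\exp(r(x_T)/\alpha)\mid x_t=x]$, giving the claim.

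I expect the crux to be the first step: making the continuous-time Bellman derivation rigorous for this CTMC control problem --- in particular controlling the $o(dt)$ terms, justifying differentiability of $V_t$ in $t$, and exchanging the supremum with the infinitesimal limit --- and then verifying that the exponential substitution in the last step exactly annihilates the nonlinear $\exp$ terms (the Hopf--Cole cancellation that turns a nonlinear ODE into a linear generator equation). A secondary point to confirm is that the linear backward equation with the given terminal data genuinely admits the stated Feynman--Kac representation under $P^{\theta_{\pre}}$, i.e.\ existence and uniqueness on the finite state space.
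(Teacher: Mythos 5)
Your proposal is correct and follows essentially the same route as the paper: derive the HJB equation for the value function, substitute the optimal generator $Q^{\theta^{\star}}_{x,y}=Q^{\theta_{\pre}}_{x,y}\exp(\{V_t(y)-V_t(x)\}/\alpha)$ to obtain the closed nonlinear ODE, exponentiate ($\phi_t=\exp(V_t/\alpha)$) to linearize it into the Kolmogorov backward equation under $Q^{\theta_{\pre}}$, and read off the Feynman--Kac representation from the terminal condition $\phi_T=\exp(r/\alpha)$. The regularity and uniqueness caveats you flag are exactly the ones the paper also acknowledges and defers to standard CTMC references.
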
}
With this preparation, we can prove our main theorem, which reduces to \pref{thm:key} when $t=T$. 
\begin{theorem}[Marginal distribution induced by the optimal generator $Q^{\theta^{\star}}(t)$]\label{thm:primary}
The marginal distribution at time $t$ by the CTMC trajectory in \eqref{eq:after_finetuned} with generator $Q^{\theta^{\star}}(t)$, i.e., $p^{\star}_t \in \Delta(\Xcal)$, satisfies  
{\small\begin{align}
    p^{\star}_t(\cdot) \propto \exp(V_t(\cdot)/\alpha )p^{\pre}_t(\cdot), 
\end{align}}
\vspace{-15pt}

where $p^{\pre}_t \in \Delta(\Xcal)$ is a marginal distribution induced by pretrained CTMC at $t$. 
\end{theorem}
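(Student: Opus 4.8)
The plan is to verify that the proposed ansatz solves the Kolmogorov forward (master) equation associated with the fine-tuned generator $Q^{\theta^{\star}}$ and then invoke uniqueness of solutions of this linear ODE. Concretely, I would define the unnormalized candidate $w_t(x) \defeq \exp(V_t(x)/\alpha)\,p^{\pre}_t(x)$ and aim to show that $w_t$ satisfies the \emph{same} master equation that governs the fine-tuned marginal $p^{\star}_t$ (the one induced by \eqref{eq:after_finetuned}), while agreeing with $p^{\star}_t$ at $t=0$ up to a multiplicative constant; proportionality for all $t$ then follows from uniqueness. The structural observation I would make explicit at the outset is that \pref{thm:optimal_control} rewrites, with $h_t \defeq \exp(V_t/\alpha)$, as $Q^{\theta^{\star}}_{x,y}(t) = Q^{\theta_{\pre}}_{x,y}(t)\,h_t(y)/h_t(x)$, i.e. the optimal generator is exactly a Doob $h$-transform of the pretrained generator. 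The target identity $p^{\star}_t \propto h_t\,p^{\pre}_t$ is precisely the marginal law of such a transformed chain, so conceptually the claim is the classical fact that $h$-transforming a CTMC reweights its marginals by $h_t$.

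First I would differentiate $w_t$ by the product rule, splitting into a $\dot V$-contribution and a $\dot p^{\pre}$-contribution. For the latter I would substitute the master equation for the pretrained marginal, whose inflow form reads $\tfrac{d}{dt}p^{\pre}_t(x) = \sum_{y} Q^{\theta_{\pre}}_{y,x}(t)\,p^{\pre}_t(y)$, and then replace $Q^{\theta_{\pre}}_{y,x}$ by $Q^{\theta^{\star}}_{y,x}\,h_t(x)/h_t(y)$ on the off-diagonal terms using \pref{thm:optimal_control}. After multiplying through by $h_t(x)$, the off-diagonal sum reassembles exactly into $\sum_{y\neq x} Q^{\theta^{\star}}_{y,x}(t)\,w_t(y)$, the off-diagonal part of the desired master equation for $w_t$ under $Q^{\theta^{\star}}$. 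What remains are diagonal bookkeeping terms, namely $[Q^{\theta_{\pre}}_{x,x}(t)-Q^{\theta^{\star}}_{x,x}(t)]\,w_t(x)$, which appear because the two generators have different diagonal entries.

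The crux is to show these leftover diagonal terms are cancelled by the $\dot V$-contribution. Here I would invoke \pref{thm:Feynman}: since $h_t(x)=\EE_{x_{t:T}\sim P^{\theta_{\pre}}}[\exp(r(x_T)/\alpha)\mid x_t=x]$ is a conditional expectation of a terminal functional, it solves the backward Kolmogorov equation $\tfrac{d}{dt}h_t(x) + \sum_{y} Q^{\theta_{\pre}}_{x,y}(t)\,h_t(y)=0$. Using $h_t=\exp(V_t/\alpha)$ and $Q^{\theta_{\pre}}_{x,x}=-\sum_{y\neq x}Q^{\theta_{\pre}}_{x,y}$, this rearranges to $\tfrac{1}{\alpha}\dot V_t(x) = -\sum_{y\neq x} Q^{\theta_{\pre}}_{x,y}(t)\,[h_t(y)/h_t(x)-1]$. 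On the other hand, the diagonal remainder equals $Q^{\theta_{\pre}}_{x,x}-Q^{\theta^{\star}}_{x,x} = \sum_{y\neq x}(Q^{\theta^{\star}}_{x,y}-Q^{\theta_{\pre}}_{x,y}) = \sum_{y\neq x}Q^{\theta_{\pre}}_{x,y}[h_t(y)/h_t(x)-1]$, again by \pref{thm:optimal_control}. These are exact negatives, so the diagonal terms cancel and $\tfrac{d}{dt}w_t(x)=\sum_{y}Q^{\theta^{\star}}_{y,x}(t)\,w_t(y)$ with no residual scalar. I expect this diagonal cancellation—coordinating the HJB/optimal-generator identity with the backward equation for $h_t$—to be the main obstacle, since it hinges on carefully tracking the distinct diagonal entries and signs of the two generators.

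Finally I would match initial conditions: the fine-tuned chain \eqref{eq:after_finetuned} starts at $p^{\star}_0=p_{\lim}$, and since $p^{\pre}_0=p_{\lim}$ as well, $w_0(x)=h_0(x)\,p_{\lim}(x)\propto p_{\lim}(x)=p^{\star}_0(x)$ (for the Dirac $p_{\lim}$ this is immediate, $w_0$ being supported on the single masked state). Uniqueness of the linear ODE then upgrades proportionality at $t=0$ to $p^{\star}_t \propto w_t = \exp(V_t/\alpha)\,p^{\pre}_t$ for all $t$, which is the claim; setting $t=T$ with $V_T=r$ recovers \pref{thm:key}. As a consistency check, one verifies that the total mass of $w_t$ is conserved along the master equation (row sums of $Q^{\theta^{\star}}$ vanish), so the normalizing constant is in fact $t$-independent and equals $\EE_{x\sim p^{\pre}}[\exp(r(x)/\alpha)]$.
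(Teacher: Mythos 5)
Your proposal is correct and follows essentially the same route as the paper: both verify that the candidate $\exp(V_t(\cdot)/\alpha)\,p^{\pre}_t(\cdot)$ satisfies the Kolmogorov forward equation under $Q^{\theta^{\star}}$ by combining the product rule with the backward (Feynman--Kac) equation for $\exp(V_t/\alpha)$, the forward equation for $p^{\pre}_t$, and the Doob-transform identity $Q^{\theta^{\star}}_{x,y}=Q^{\theta_{\pre}}_{x,y}\exp(\{V_t(y)-V_t(x)\}/\alpha)$, then match the (Dirac) initial condition and invoke uniqueness. The only differences are presentational: you isolate the cancellation as a diagonal-bookkeeping step and add the mass-conservation remark, whereas the paper cancels the corresponding terms directly inside the product-rule expansion.
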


This is proved by showing the Kolmogorov forward equation in CTMC: $dp^{\star}_t/dt = Q^{\theta^{\star}}(t)p^{\star}_t$.

\subsection{Relation to Classifier Guidance for Discrete Diffusion Models}

Now, we derive an alternative fine-tuning-free algorithm by leveraging observations in \pref{sec:sketch} for reward maximization. If we can directly obtain the optimal generator $Q^{\theta^{\star}}$, we can achieve our objective. 
\pref{thm:optimal_control} suggests that the optimal generator $Q^{\theta^{\star}}$ is a product of the generator from the pretrained model and the value functions. Although we don't know the exact value functions, they can be learned through regression using \pref{thm:Feynman} based on 
{\small\begin{align}
\rebuttal{\exp(V_t(\cdot)/\alpha)=\argmin_{g:\Xcal \to \RR} \EE_{x_T\sim P^{\theta_{\pre}}(x_T|x_t) }[\{\exp(r(x_T)/\alpha) -g(x_t) \}^2 ] }. 
\end{align}}
\vspace{-10pt}

In practice, while we can't calculate the exact expectation, we can still replace it with its empirical analog. Alternatively, we can approximate it by using a map from $x_t$ to $x_0$ in pretrained models following DPS \citep{chung2022improving} or reconstruction guidance \citep{ho2022video}. 

Interestingly, a similar algorithm was previously proposed by \citet{nisonoff2024unlocking}. While \citet{nisonoff2024unlocking} originally focused on conditional generation, their approach can also be applied to reward maximization or vice versa. In their framework for conditional generation, they define $r(x) = \log p(z|x)$ (e.g., the log-likelihood from a classifier) and set $\alpha=1$. By adapting \pref{thm:optimal_control} and \ref{thm:Feynman} to their setting, we obtain:  
{\small\begin{align}
      Q^{\theta^{\star}}_{x,y}(t) =Q^{\theta_{\mathrm{pre}}}_{x,y}(t)\times p_t(z|y)/p_t(z|x),\quad  p_t(z|x_t):= \EE_{x_{t:T}\sim P^{\theta_{\pre}}}[p(z|x_T) \mid x_t]. 
\end{align}}

Thus, we can rederive the formula in \citet{nisonoff2024unlocking}. \rebuttal{Here, we also note that this type of result is commonly referred to as the Doob transform in the literature on stochastic processes \citep[Section 17]{levin2017markov}.}

While this argument suggests that classifier guidance and RL-based fine-tuning approaches theoretically achieve the same goal in an ideal setting (without function approximation, sampling, or optimization errors), their practical behavior can differ significantly, as we demonstrate in \pref{sec:experiments}. At a high level, the advantage of classifier guidance is that it requires no fine-tuning, but the inference time may be significantly longer due to the need to recalculate the generator during inference. Indeed, this classifier guidance requires $\mathrm{O}(NM)$ computations of value functions at each step to calculate the normalizing constant. While this can be mitigated using a Taylor approximation, there is no theoretical guarantee for this heuristic in discrete diffusion models. Lastly, learning value functions in classifier guidance can often be practically challenging.

\vspace{-2mm}
\section{Experiments}
\vspace{-2mm}

\label{sec:experiments}
{ Our experiments focus on the design of regulatory DNA sequences for enhancer activity and protein sequences for stability. Our results include comprehensive evaluations, highlighting the ability of \alg\ to produce natural-like
sequences while effectively optimizing the desired properties. \cameraready{We also experiment in text generation for minimizing toxicity, detailed in~\pref{app:exp.text}.}

\vspace{-2mm}
\subsection{Baselines}

We compare \alg\ against several baseline methods discussed in \pref{sec:related_works}, which we summarize below with further details in~\pref{app:exp.baseline}.
\vspace{-5pt}
\begin{itemize}[leftmargin=*]
    \item \textbf{Guidance-based Methods (CG, SMC, TDS).}  We compare our approach with representative guidance-based methods, including state-of-the-art classifier guidance (\textbf{CG}) tailored to discrete diffusion models \citep{nisonoff2024unlocking}, SMC-based guidance methods~(e.g., \citet{wu2024practical}): \textbf{SMC}, where the proposal is a pretrained model and \textbf{TDS}, where the proposal is CG. 
\vspace{-1.5pt}
    \item \textbf{Classifier-free Guidance (CFG)}~\citep{ho2022classifier}. CFG is trained on labeled datasets with the measured attributes we aim to optimize. 
\vspace{-1.5pt}
    \item \textbf{Pretrained.} We generated sequences using pretrained models without fine-tuning.
\vspace{-1.5pt}
    \item \textbf{\alg\,w/o KL.} This ablation of \alg\ omits the KL regularization term, evaluating how well this term mitigates over-optimization (discussed in \pref{sec:key}).  
\end{itemize}

\subsection{Regulatory DNA Sequence Design} \label{sec:exp.dna}

Here we aim to optimize the activity of regulatory DNA sequences such that they drive gene expression in specific cell types, a critical task for cell and gene therapy \citep{taskiran2024cell}.

\textbf{Dataset and settings.} We experiment on a publicly available large-scale enhancer dataset~\citep{gosai2023machine}, which measures the enhancer activity of $\sim$700k DNA sequences (200-bp length) in human cell lines using massively parallel reporter assays (MPRAs), where the expression driven by each sequence is measured. We pretrain the masked discrete diffusion model~\citep{sahoo2024simple} on all the sequences. We then split the dataset and train two reward oracles (one for finetuning and one for evaluation) on each subset, using the Enformer~\citep{avsec2021effective} architecture to predict the activity level in the HepG2 cell line. These datasets and reward models are widely used in the literature on computational enhancer design \citep{lal2024reglm,uehara2024bridging,sarkar2024designing}.
Detailed information about the pretrained model and reward oracles is in~\pref{app:exp.dna}.

\textbf{Evaluations.} To comprehensively evaluate each model's performance in enhancer generation, we use the following metrics:
\vspace{-5pt}
\begin{itemize}[leftmargin=*]
\vspace{-2pt}
    \item \textit{Predicted activity based on the evaluation reward oracle (Pred-Activity).} We predict the enhancer activity level in the HepG2 cell line using the reward oracle trained on the evaluation subset. Note that the models are fine-tuned (or guided) with the oracle trained on a \emph{different} subset of the data, splitting based on chromosome following conventions (but in the same cell lines) \citep{lal2024reglm}. 
\vspace{-2pt}
    \item \textit{Binary classification on chromatin accessibility (ATAC-Acc).} We use an independent binary classification model trained on chromatin accessibility data in the HepG2 cell line ~\citep{encode2012integrated} (active enhancers should have accessible chromatin). While this is \emph{not} used for fine-tuning, we use it for evaluation to further validate the predicted activity of the synthetic sequences, following \citet{lal2024reglm}. 
\vspace{-2pt}
    \item \textit{3-mer Pearson correlation (3-mer Corr).} We calculate the 3-mer Pearson correlation between the synthetic sequences and the sequences in the dataset~\citep{gosai2023machine} with top 0.1\% HepG2 activity level. Models that generate sequences that are more natural-like and in-distribution have a higher correlation.
\vspace{-2pt}
    \item \textit{JASPAR motif analysis (JASPAR Corr).} We scan the generated sequences of each model with JASPAR transcription factor binding profiles~\citet{castro2022jaspar}, which identify potential transcription factor binding motifs in the enhancer sequences (which are expected to drive enhancer activity). We then count the occurrence frequency of each motif and calculate the Spearman correlation of motif frequency between the synthetic sequences generated by each model and the top 0.1\% HepG2 activity sequences in the dataset.
\vspace{-2pt}
    \item \textit{\rebuttal{Approximated} log-likelihood of sequences (\rebuttal{App-Log-Lik}).} We calculate the log-likelihood of the generated sequences with respect to the pretrained model to measure how likely the sequences are to be natural-like. Models that over-optimize the reward oracle generate out-of-distribution sequences and would have a low likelihood to the pretrained model. The likelihood is calculated using the ELBO of the discrete diffusion model in ~\citet{sahoo2024simple}.
\end{itemize}

\vspace{-5pt}
\textbf{Results.} \alg\ generates sequences with high predicted activity in the HepG2 cell line, as robustly measured by \textit{Pred-Activity} and \textit{ATAC-Acc} (Table~\pref{table:dna}). 
The generated sequences closely resemble natural enhancers, as indicated by high 3-mer and JASPAR motif correlations, and a similar likelihood to the pretrained model. These highlight \alg's effectiveness in generating plausible high-activity enhancer sequences. 
Notably, while \alg, without KL regularization achieves higher \textit{Pred-Activity}, this can be attributed to over-optimization. Despite splitting the data for fine-tuning and evaluation, the sequences remain highly similar due to many analogous regions within each chromosome. However, when evaluated with an independent activity oracle, \textit{ATAC-Acc}, \alg\, demonstrates superior performance while maintaining higher correlations and log-likelihood. \cameraready{Ablations on gradient truncation and Gumbel Softmax schedule are in \pref{app:exp.dna}.}

\begin{table}[!th]
    \centering
     \vspace{-5pt}
    \caption{Model performance on regulatory DNA sequence design. \alg\ generates sequences with high activity in the HepG2 cell line, measured by \textit{Pred-Activity} and \textit{ATAC-Acc}, while being natural-like by high 3-mer and JASPAR motif correlations and likelihood. We report the mean across 3 random seeds, with standard deviations in parentheses.}
    \label{table:dna}
    \small
    \setlength{\tabcolsep}{4pt}
    \renewcommand{\arraystretch}{1.08}
    \begin{adjustbox}{max width=0.98\textwidth}
        \begin{tabular}{l ccccc}
            \toprule
            Method & Pred-Activity (median)\,$\uparrow$ & ATAC-Acc\,$\uparrow$ (\%) & 3-mer Corr\,$\uparrow$ & JASPAR Corr\,$\uparrow$ & \rebuttal{App-Log-Lik} (median)\,$\uparrow$ \\
            \midrule
            Pretrained & 0.17(0.04) & 1.5(0.2) & -0.061(0.034) & 0.249(0.015) & -261(0.6) \\
            CG & 3.30(0.00) & 0.0(0.0) & -0.065(0.001) & 0.212(0.035) & -266(0.6) \\
            SMC & 4.15(0.33) & 39.9(8.7) & 0.840(0.045) & 0.756(0.068) & -259(2.5)\\
            TDS & 4.64(0.21) & 45.3(16.4) & 0.848(0.008) & 0.846(0.044) & \textbf{-257(1.5)} \\
            CFG & 5.04(0.06) & 92.1(0.9) & 0.746(0.001) & 0.864(0.011) & -265(0.6) \\
            \midrule
            \alg \ w/o KL & \textbf{6.44(0.04)} & 82.5(2.8) & 0.307(0.001) & 0.557(0.015) & -281(0.6) \\
            \alg & 5.61(0.07) & \textbf{92.5(0.6)} & \textbf{0.887(0.002)} & \textbf{0.911(0.002)} & -264(0.6) \\
            \bottomrule
        \end{tabular}
    \end{adjustbox}
\end{table}

\vspace{-2mm}
\subsection{Protein Sequence Design: Optimizing Stability in Inverse Folding Model} \label{sec:protein_seq}
\vspace{-2mm}

In this task, given a pretrained inverse folding model that generates sequences conditioned on the backbone's conformation (3D structure), our goal is to optimize the stability of these generated sequences, following \citet{widatalla2024aligning}.

\textbf{Dataset and settings.} First, we pretrained an inverse folding model based on the diffusion model
\citep{campbell2024generative} and the ProteinMPNN~\citep{dauparas2022robust} architecture, using the PDB training set from \citet{dauparas2022robust}. 
Next, we trained the reward oracles using a different large-scale protein stability dataset, Megascale~\citep{tsuboyama2023mega}, which includes stability measurements (i.e., the Gibbs free energy change) for $\sim$1.8M sequence variants from 983 natural and designed domains. 
Following dataset curation and a train-validation-test splitting procedure from \citet{widatalla2024aligning} (which leads to $\sim$0.5M sequences on 333 domains) and using the ProteinMPNN architecture, we constructed two reward oracles -- one for fine-tuning and one for evaluation, that predict stability from the protein sequence and wild-type conformation. Detailed information on the pretrained model and reward oracles is in~\pref{app:exp.protein}.

\textbf{Evaluations.} We use the following metrics to evaluate the stability of the generated sequences and their ability to fold into the desired structure. During evaluation, we always condition on protein backbone conformations from the test data that are \emph{not} used during fine-tuning.
\begin{itemize}[leftmargin=*]
\vspace{-5pt}
    \item \textit{Predicted stability on the evaluation reward oracle (Pred-ddG).} The evaluation oracle is trained with the full Megascale dataset (train+val+test) to predict protein stability. Conversely, the fine-tuning oracle is trained only with data from the Megascale training set. Thus, during fine-tuning, the algorithms do not encounter any proteins used for evaluation.
\vspace{-2pt}
    \item \textit{Self-consistency RMSD of structures (scRMSD).} To assess how well a generated sequence folds into the desired structure, we use ESMFold~\citep{lin2023evolutionary} to predict the structures of the generated sequences and calculate their RMSD relative to the wild-type structure (i.e., the original backbone structure we are conditioning on). This is a widely used metric \citep{campbell2024generative,trippe2022diffusion,chu2024all}. 
\end{itemize}
Following prior works~\citep{campbell2024generative,nisonoff2024unlocking}, we calculate the success rate of inverse folding as the ratio of generated sequences with \textit{Pred-ddG}$>0$ and \textit{scRMSD}$<2$.

\newcolumntype{a}{>{\columncolor{Gray}}c}

\begin{table}[!th]
    \centering
    \caption{Model performance on inverse protein folding. \alg\ generates protein sequences that have high stability and fold to the desired structure, outperforming baselines in the overall success rate. We report the mean across 3 random seeds, with standard deviations in parentheses.}
    \label{table:protein}
    \small
    
    \setlength{\tabcolsep}{4pt}
    \renewcommand{\arraystretch}{1.08}
    \begin{adjustbox}{max width=0.98\textwidth}
        \begin{tabular}{l cccca}
            \toprule
            Method & Pred-ddG (median)\,$\uparrow$ & \%(ddG$>0$) (\%)\,$\uparrow$ & scRMSD (median)\,$\downarrow$ & \%(scRMSD$<2$)(\%)\,$\uparrow$  & Success Rate (\%)\,$\uparrow$ \\
            \midrule
            Pretrained & -0.544(0.037) & 36.6(1.0) & 0.849(0.013) & 90.9(0.6) & 34.4(0.5) \\
            CG & -0.561(0.045) & 36.9(1.1) & 0.839(0.012) & 90.9(0.6) & 34.7(0.9) \\
            SMC & 0.659(0.044) & 68.5(3.1) & 0.841(0.006) & 93.8(0.4) & 63.6(4.0) \\
            TDS & 0.674(0.086) & 68.2(2.4) & \textbf{0.834(0.001)} & \textbf{94.4(1.2)} & 62.9(2.8) \\
            CFG & -1.186(0.035) & 11.0(0.4) & 3.146(0.062) & 29.4(1.0) & 1.3(0.4) \\
            \midrule
            \alg \ w/o KL & \textbf{1.108(0.004)} & \textbf{100.0(0.0)} & 7.307(0.054) & 34.1(0.2) & 34.1(0.2) \\
            \alg & 1.095(0.026) & 86.4(0.2) & 0.918(0.006) & 91.8(0.5) & \color{darkred}{\textbf{78.6(0.7)}} \\
            \bottomrule
        \end{tabular}
    \end{adjustbox}
\end{table}
\begin{figure}[!th]
\begin{subfigure}{.50\textwidth}
 \centering
   \includegraphics[width=0.9\textwidth]{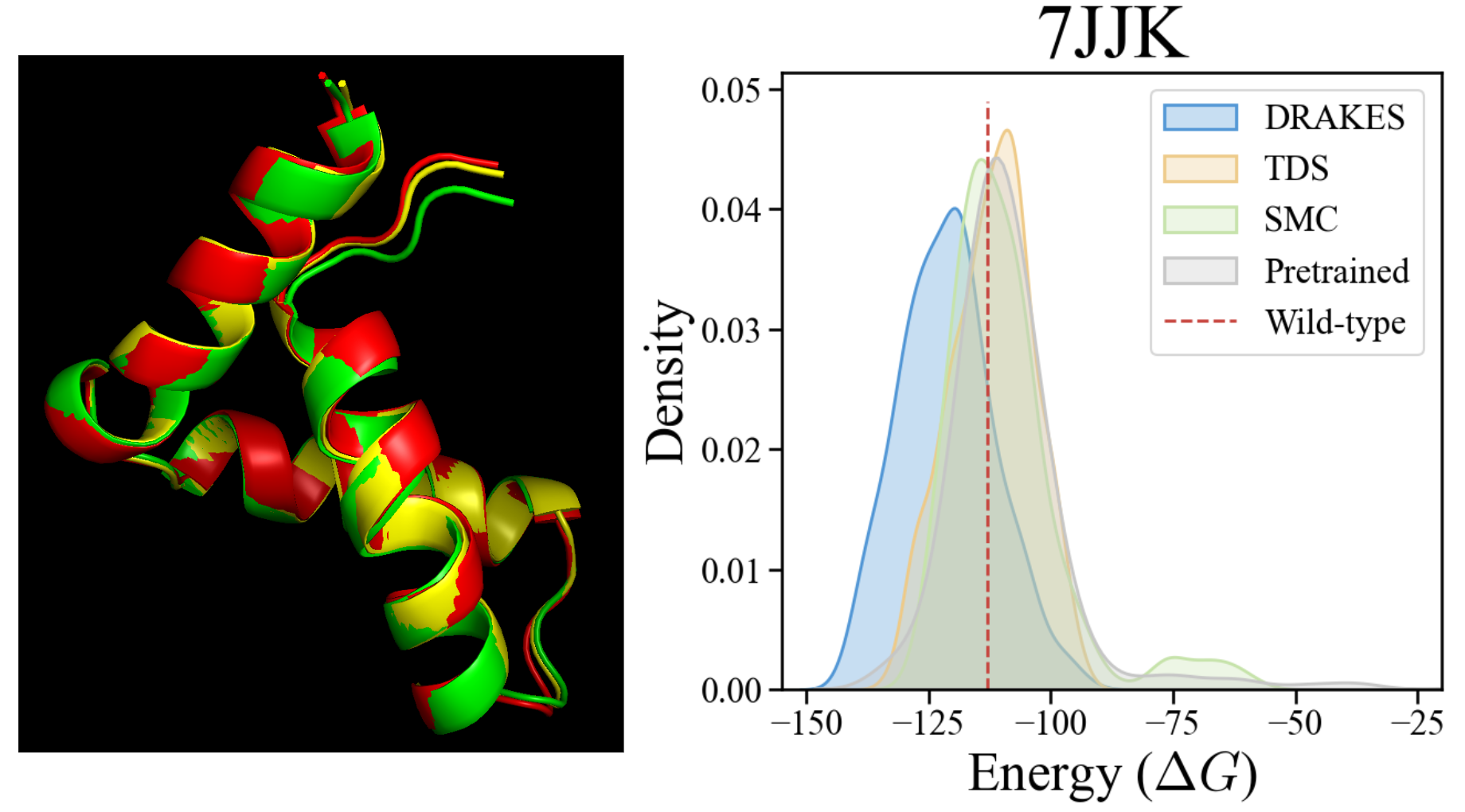}
  \caption{Conditioning on the backbone structure of 7JJK.}
\end{subfigure}%
\begin{subfigure}{.50\textwidth}
\centering
  \includegraphics[width=0.9\textwidth]{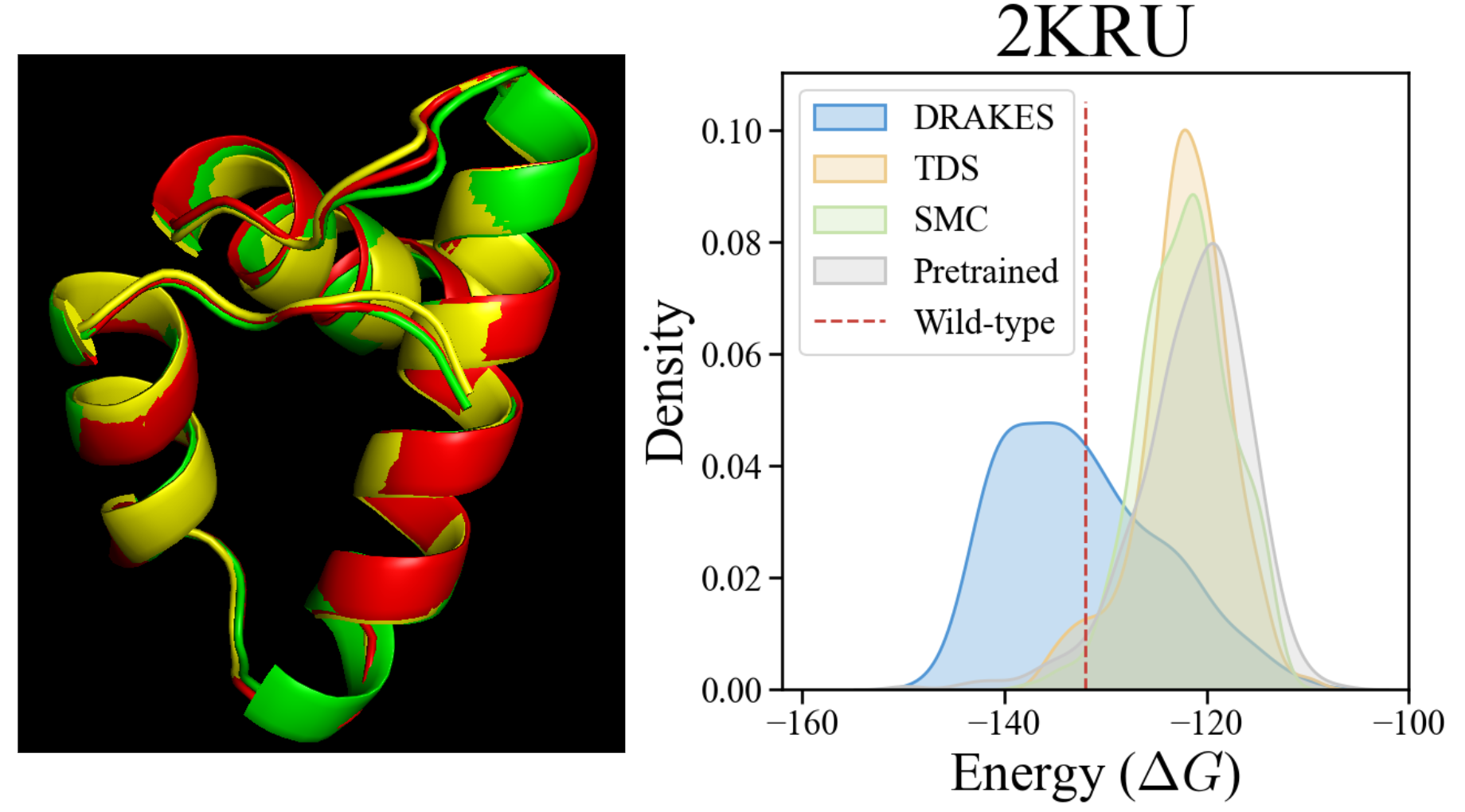}
  \caption{Conditioning on the backbone structure of 2KRU.}
\end{subfigure}%
  \caption{Examples of generated proteins. \textcolor{red}{Red}: Wild-type backbone structure (the one we condition on), \textcolor{darkgoldenrod}{Yellow}: Structure predicted by ESMFold from the wild-type (true) sequence, \textcolor{green}{Green}: Structure predicted by ESMFold from the sequence generated by \alg. The structures for sequences generated by \alg\ show good alignment with the original structure (the scRMSDs are $0.768$ for 7JJK and $0.492$ for 2KRU). Histograms: Gibbs free energy for each generated sequence, calculated using physics-based simulations. In these two cases, the sequences generated by \alg\ appear to be more stable than the baselines.} \label{fig:whole_figure}
\vspace{-15pt}
\end{figure}

\vspace{-5mm}
\textbf{Results.} For inverse protein folding, \alg\ generates high-stability protein sequences capable of folding into the conditioned structure (Table~\pref{table:protein}). It achieves the highest \textit{Pred-ddG} among all methods, while maintaining a similar success rate of inverse folding (measured by \%(scRMSD$<2$), the percentage of \textit{scRMSD} smaller than 2)  as the pretrained model. Considering both factors, \alg\ significantly outperforms all baseline methods in terms of overall success rate. 
Note that CFG does not work well for protein sequence design due to limited labeled data, as Megascale includes only a few hundred backbones, making generalization difficult. This is expected, as we mention in \pref{sec:related_works}. Further details are provided in \pref{app:exp.baseline}.  \cameraready{We provide additional results with pLDDT and sequence diversity, as well as ablation studies on the gradient truncation number and Gumbel Softmax temperature schedule in \pref{app:exp.protein}.}

Moreover, the results highlight the importance of the KL term, as \alg\ without KL regularization tends to suffer from over-optimization, with high \textit{scRMSD} (i.e., failing to fold back to the target backbone structure), even though \textit{Pred-ddG} may remain high.

\vspace{-2mm}
\paragraph{In silico validation.} For validation purposes, we calculate the stability (i.e., Gibbs free energy) of the generated sequences using physics-based simulations (PyRosetta~\citep{chaudhury2010pyrosetta}) for wild-type protein backbone structures in \pref{fig:whole_figure}, following \citep{widatalla2024aligning}. Although all models are conditioned on the same set of protein backbones, different sets of sequences generated by generative methods can lead to significant differences in side chain interactions, which affect folding energies.  The results demonstrate that sequences generated by \alg\ are more stable in this validation compared to other baselines. For additional results, refer to \pref{fig:app.energy} in \pref{app:exp.protein}. 

} 

\vspace{-2mm}
\section{Conclusions}\label{sec:conclusions}
\vspace{-2mm}
We propose a novel algorithm that incorporates reward maximization into discrete diffusion models, leveraging the Gumbel-Softmax trick to enable differentiable reward backpropagation, and demonstrate its effectiveness in generating DNA and protein sequences optimized for task-specific objectives. For future work, we plan to conduct more extensive in silico validation and pursue wet-lab validation.

\newpage
\section*{Reproducibility Statement}
For the theoretical results presented in the paper, we provide explanations of assumptions and complete proofs in \pref{app:proof}. For the proposed algorithm and experimental results, we provide detailed explanations of the algorithm implementations and experimental setup in \pref{sec:alg}, \pref{sec:experiments}, \pref{app:alg}, and \pref{app:exp}, and the code and data can be found in \url{https://github.com/ChenyuWang-Monica/DRAKES}.
For the datasets used in the experiments, we utilize publicly available datasets and elaborate the data processing procedures in \pref{sec:experiments} and \pref{app:exp}.
\section*{Acknowledgement}
CW and TJ acknowledge support from the Machine Learning for Pharmaceutical Discovery and Synthesis (MLPDS) consortium, the DTRA Discovery of Medical Countermeasures Against New and Emerging (DOMANE) threats program, and the NSF Expeditions grant (award 1918839) Understanding the World Through Code.

\newpage
\bibliographystyle{iclr2025_conference}
\bibliography{rl}

\newpage 
\appendix

\section{More Related Works}
\paragraph{Dirichlet diffusion models for discrete spaces.} Another approach to diffusion models for discrete spaces has been proposed \citep{stark2024dirichlet,avdeyev2023dirichlet,zhou2024beta}. In these models, each intermediate state is represented as a vector within a simplex. This is in contrast to masked diffusion models, where each state is a discrete variable.

\rebuttal{\textbf{Relative trajectory balance for posterior inference.} \citet{venkatraman2024amortizing} proposed to use relative trajectory balance to train a diffusion model that sample from a posterior $x\sim p^{\text{post}}(x) \propto p(x)r(x)$. When $r(x)$ is defined as the exponential reward, it can be utilized for reward optimization. However, this approach requires estimation of the normalizing constant term of the unnormalized density. In contrast, we solve a control problem with direct backpropagation. Besides, \citet{venkatraman2024amortizing} models continuous diffusion as a Markovian generative process and is not specifically tailored to discrete diffusion models.}

\section{Potential Limitations}

We have formulated the RL problem, \pref{eq:original}, in the context of CTMC. The proposed algorithm in our paper to solve this problem requires reward models to be differentiable. Since differentiable models are necessary when working with experimental offline data, this assumption is not overly restrictive. Moreover, many state-of-the-art sequence models mapping sequences to functions in biology are available today, such as enformer borozi. In cases where creating differentiable models is challenging, we recommend using PPO-based algorithms \citep{black2023training,fan2023dpok} or reward-weighted MLE \citep[Section 6.1]{uehara2024understanding}.

\section{Proof of Theorems}\label{app:proof}

\subsection{Preparation}

We prepare two well-known theorems in CTMC for the pedagogic purpose. For example, refer to \citet{yin2012continuous} for the more detailed proof. In these theorems, we suppose we have the CTMC:
\begin{align}\label{eq:CTMC_pre}
    \frac{dp_t}{dt}=Q(t)p_t. 
\end{align}

\begin{lemma}[Kolmogorov backward equation]
We consider $g(\cdot,t)= \EE[r(x_T)|x_t=\cdot]$ where the expectation is taken w.r.t. \eqref{eq:CTMC_pre}. Then, this function $g:\Xcal \times [0,T]\to \RR$ is characterized by the following ODE: 
\begin{align*}
    \frac{dg(x,t)}{dt}= \sum_{y\neq x}Q_{x,y}(t)\{g(x,t) -g(y,t)\},\quad g(x,T)=r(x_T). 
\end{align*}
\end{lemma}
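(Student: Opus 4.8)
The plan is to derive the stated backward ODE directly from the Markov property of the CTMC together with the infinitesimal description of the transition probabilities used throughout the paper. The terminal condition is immediate: at $t=T$ we have $g(x,T)=\EE[r(x_T)\mid x_T=x]=r(x)$, so it only remains to establish the ODE on $[0,T)$.

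First I would fix a small $\Delta t>0$ and condition on the state at the later time $t+\Delta t$. By the tower property and the Markov property of the chain generated by \eqref{eq:CTMC_pre}, $g(x,t)=\sum_{y}p(x_{t+\Delta t}=y\mid x_t=x)\,g(y,t+\Delta t)$. Next I would insert the one-step transition expansion already adopted in the algorithm section, namely $p(x_{t+\Delta t}=y\mid x_t=x)=\mathrm{I}(x=y)+Q_{x,y}(t)\Delta t+o(\Delta t)$, which gives $g(x,t)=g(x,t+\Delta t)+\Delta t\sum_{y}Q_{x,y}(t)g(y,t+\Delta t)+o(\Delta t)$.

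The key algebraic step is to use that each row of the generator sums to zero, $\sum_{y}Q_{x,y}(t)=0$, i.e. $Q_{x,x}(t)=-\sum_{y\neq x}Q_{x,y}(t)$, so that $\sum_{y}Q_{x,y}(t)g(y,t+\Delta t)=\sum_{y\neq x}Q_{x,y}(t)\{g(y,t+\Delta t)-g(x,t+\Delta t)\}$. Rearranging, dividing by $\Delta t$, and letting $\Delta t\to 0$ then yields $\frac{d}{dt}g(x,t)=-\sum_{y\neq x}Q_{x,y}(t)\{g(y,t)-g(x,t)\}=\sum_{y\neq x}Q_{x,y}(t)\{g(x,t)-g(y,t)\}$, which is exactly the claimed equation.

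The main obstacle is making this limit rigorous rather than merely formal: one must justify that the $o(\Delta t)$ remainder is uniform so it vanishes after dividing by $\Delta t$ (this is harmless here because $\Xcal$ is finite and $Q(t)$ is bounded and continuous in $t$), and that $g(x,\cdot)$ is genuinely differentiable in $t$ so the difference quotient converges. A clean way to handle both is to invoke continuity of the flow of the linear ODE \eqref{eq:CTMC_pre} on the finite state space to get $g(\cdot,t+\Delta t)\to g(\cdot,t)$, together with the standard existence and regularity theory for such linear ODEs; alternatively one may first verify the identity in integral (mild) form and then differentiate. Care must also be taken with the time direction, since this is a terminal-value (backward) problem integrated from $t=T$ downward, and the sign bookkeeping in the $\Delta t\to 0$ step is where an error would most easily slip in.
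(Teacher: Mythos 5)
Your proposal is correct and follows essentially the same route as the paper's proof: condition on the state at time $t+\Delta t$ via the Markov property, insert the first-order expansion of the transition probabilities, use the row-sum-zero property of the generator to rewrite the sum over $y\neq x$, and pass to the limit. Your added remarks on uniformity of the $o(\Delta t)$ remainder and on the converse/uniqueness direction are reasonable refinements of points the paper also acknowledges but defers to standard references.
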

\begin{proof}
Here, we prove that the p.d.f. $g$ satisfies the above backward equation. To show the converse, we technically require regularity conditions to claim the ODE solution is unique, which can often be proved by the contraction mapping theorem {under certain regularity conditions. Since this is a well-known result, we skip the converse part. If readers are interested in details, we refer to \citet{yin2012continuous}.}

When $t=T$, the statement $g(x,T)=r(x_T)$ is obvious. For the rest of the proof, we aim to show a result when $t\neq T$. 
We have 
\begin{align*}
 g(x_t,t) = \int g(x_{t+dt},t+dt)p(x_{t+dt}|x_t)d x_{t+dt}.  
\end{align*}
The above implies 
\begin{align*}
    g(x,t) = \{1+ Q_{x,x}(t)dt \}g(x,t+dt) + \sum_{y\neq x }\{Q_{x,y}(t)dt\} g(y,t+dt). 
\end{align*}
Now combined with the property of the generator as follows
\begin{align*}
    0= \sum_{y} Q_{x,y}(t+dt), 
\end{align*}
With some algebra,
\begin{align*}
       g(x,t) = g(x,t+dt)-\sum_{y\neq x }\{Q_{x,y}(t)dt\}  g(x,t+dt)   +  \sum_{y\neq x }\{Q_{x,y}(t)dt\} g(y,t+dt). 
\end{align*}
Then, we have 
\begin{align*}
    \frac{g(x,t)  - g(x,t+dt)}{dt} = \sum_{y\neq x }Q_{x,y}(t)\{g(y,t+dt) -g(x,t+dt)  \} 
\end{align*}
Finally, by setting $dt\to 0$,  we obtain 
\begin{align*}
    -\frac{dg(x,t)}{dt}= \sum_{y\neq x}Q_{x,y}(t)\{g(y,t) -g(x,t)\}. 
\end{align*}
\end{proof}
 
\begin{lemma}[Kolmogorov forward equation]
The density $p_t \in \Delta(\Xcal)$ is characterized as the following ODE: 
\begin{align*}
    \frac{dp_t(x)}{dt} =\sum_{y\neq x}Q_{y,x}(t)p_t(y) - \sum_{y\neq x}Q_{x,y}(t)p_t(x), \quad p_0=p_{\mathrm{ini}}. 
\end{align*}
\end{lemma}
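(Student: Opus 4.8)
The plan is to propagate the marginal law \emph{forward} in time, mirroring the structure of the Kolmogorov backward-equation proof given just above but reversing its direction. Concretely, I would start from the law of total probability applied to the marginal at time $t+dt$, conditioning on the state occupied at time $t$:
\begin{align*}
    p_{t+dt}(x) = \sum_{z\in\Xcal} p(x_{t+dt}=x \mid x_t = z)\, p_t(z).
\end{align*}
The essential input is the first-order transition expansion already recorded in the discretization of the CTMC, namely $p(x_{t+dt}=x\mid x_t=z)=\mathrm{I}(z=x)+Q_{z,x}(t)\,dt$, which is the standard infinitesimal description of the semigroup generated by $Q(t)$.

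Next I would substitute this expansion and isolate the diagonal ($z=x$) contribution, obtaining
\begin{align*}
    p_{t+dt}(x) = p_t(x) + Q_{x,x}(t)\,p_t(x)\,dt + \sum_{z\neq x} Q_{z,x}(t)\,p_t(z)\,dt.
\end{align*}
To eliminate the diagonal rate I would invoke the defining property of a generator, that each row sums to zero, i.e. $\sum_{y}Q_{x,y}(t)=0$, equivalently $Q_{x,x}(t) = -\sum_{y\neq x}Q_{x,y}(t)$ (this is exactly the identity used in the backward-equation lemma). Substituting it, forming the difference quotient $\{p_{t+dt}(x)-p_t(x)\}/dt$, and letting $dt\to 0$ yields, after renaming the summation index $z$ to $y$,
\begin{align*}
    \frac{dp_t(x)}{dt} = \sum_{y\neq x}Q_{y,x}(t)\,p_t(y) - \sum_{y\neq x}Q_{x,y}(t)\,p_t(x),
\end{align*}
while the initial condition $p_0=p_{\mathrm{ini}}$ is immediate from the definition of the chain. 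The two surviving sums then read off transparently as the probability mass flowing \emph{into} $x$ from every other state and the mass flowing \emph{out} of $x$, which is the familiar flux-balance interpretation of the master equation.

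As in the backward-equation lemma, the genuinely delicate point is not this formal manipulation but the claim that the resulting linear ODE characterizes $p_t$ \emph{uniquely}. I would dispose of this exactly as the authors do for the converse of the backward equation: invoke standard regularity conditions under which the linear system has a unique solution (via a contraction-mapping / Picard--Lindel\"of argument, or simply by citing \citet{yin2012continuous}), so that the marginal we constructed is the only function satisfying the ODE with the given initial data. I would also note in passing the bookkeeping subtlety that the coordinate form derived above is $\sum_{z}Q_{z,x}(t)p_t(z)$, i.e. the in-flow term is weighted by $Q_{z,x}$ (the rate \emph{into} $x$) rather than $Q_{x,z}$; flagging this makes the balance-law reading unambiguous and reconciles the componentwise statement with the matrix notation fixed earlier, but it introduces no additional computation.
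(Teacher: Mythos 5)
Your proposal is correct and follows essentially the same route as the paper's own proof: expand $p_{t+dt}(x)$ by total probability over the state at time $t$, insert the infinitesimal transition kernel $\mathrm{I}(z=x)+Q_{z,x}(t)\,dt$, eliminate the diagonal rate via $\sum_y Q_{x,y}(t)=0$, and pass to the limit $dt\to 0$, deferring the uniqueness/converse direction to standard regularity conditions exactly as the authors do. No substantive differences to report.
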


\begin{proof}
Here, we prove that the p.d.f. $p_t$ satisfies the above forward equation. To show the converse, we technically require regularity conditions to claim the ODE solution is unique, which can often be proved by the contraction mapping theorem. Here, we skip the converse part. 

We first have 
\begin{align*}
    p_{t+dt}(x)=\int p_{t+dt}(x|x_t)p_t(x_t) dx_t  
\end{align*}
This implies 
\begin{align*}
    p_{t+dt}(x) 
    &= \sum_{y\neq x}\{Q_{y,x}(t)dt p_{t}(y)\} + \{1+ Q_{x,x}(t)dt\}  p_{t}(x)\\
    &= \sum_{y\neq x}\{Q_{y,x}(t)dt p_{t}(y)\} +  \{1- \sum_{y\neq x}Q_{x,y}(t)dt\} p_{t}(x). 
\end{align*}
Hence,
\begin{align*}
    \frac{p_{t+dt}(x) - p_{t}(x)}{dt} =  \sum_{y\neq x}Q_{y,x}(t) p_{t}(y) - 
 \sum_{y\neq x}Q_{x,y}(t) p_{t}(x). 
\end{align*}
By taking $dt\to 0$, we obtain 
\begin{align*}
    \frac{dp_t(x)}{dt}   =  \sum_{y\neq x}Q_{y,x}(t) p_{t}(y) - 
 \sum_{y\neq x}Q_{x,y}(t) p_{t}(x). 
\end{align*}
Then, the proof is completed.  
\end{proof}

\subsection{Proof of \pref{thm:optimal_control}}

We derive the Hamilton-Jacobi-Bellman (HJB) equation in CTMC. For this purpose, we consider the recursive equation: 
{\small 
\begin{align*}
    V(x,t) &= \max_{\theta}  \left [ \left\{ \sum_{y \neq x}Q^{\theta}_{x,y}(t) -  Q^{\theta_{\mathrm{pre}}}_{x,y}(t) - Q^{\theta}_{x,y}(t) \log\frac{Q^{\theta}_{x,y}(t)}{ Q^{\theta_{\mathrm{pre}}}_{x,y}(t)}\right \}dt \right. \\
     &+ \left. \sum_{y\neq x } \{ Q^{\theta}_{x,y}(t)dt V(y,t+dt)\}+ \{1+  Q^{\theta}_{x,x}(t)\}V(x,t+dt)  \}\right ]. 
\end{align*}
} 
Using $\sum_{y \in \Xcal}Q_{x,y}(t)=0$, this is equal to 
{\small  
\begin{align*}
    V(x,t) &= \max_{\theta}  \left [  \left\{ \sum_{y \neq x}Q^{\theta}_{x,y}(t) -  Q^{\theta_{\mathrm{pre}}}_{x,y}(t) - Q^{\theta}_{x,y}(t) \log\frac{Q^{\theta}_{x,y}(t)}{ Q^{\theta_{\mathrm{pre}}}_{x,y}(t)}\right \}dt    \right. \\ 
     & \left.  + V(x,t+dt) + \sum_{y \neq x} Q^{\theta}_{x,y}(t)dt\{V(y,t+dt)-V(x,t+dt) \}        \right ]. 
\end{align*}
}
By taking $d t$ to $0$, the above is equal to 
\begin{align}\label{eq:original2}
\textstyle
     - \frac{dV(x,t)}{dt} &= \max_{\theta \in \Theta} \left \{ \left[ \sum_{y \neq x}Q^{\theta}_{x,y}(t) -  Q^{\theta_{\mathrm{pre}}}_{x,y}(t) - Q^{\theta}_{x,y}(t) \log\frac{Q^{\theta}_{x,y}(t)}{ Q^{\theta_{\mathrm{pre}}}_{x,y}(t)}\right ] +  \sum_{y \neq x} Q^{\theta}_{x,y}(t) \{V(y,t)-V(x,t) \}      \right \} 
\end{align}
This is the HJB equation in CTMC. 

Finally, with simple algebra (i.e., taking functional derivative under the constraint $0=\sum_{y \in \Xcal}Q^{\theta}_{x,y}(t)$), we can show 
\begin{align*}
 \forall x\neq y; Q^{\theta^{\star}}_{x,y}(t) = Q^{\theta_{\mathrm{pre}}}_{x,y}(t)\exp(\{V(y,t)-V(x,t)\}).  
\end{align*}

\subsection{Proof of \pref{thm:Feynman}}

This theorem is proved by invoking the Kolmogorov backward equation. 

First, by plugging 
\begin{align*}
 \forall x\neq y; Q^{\theta^{\star}}_{x,y}(t) = Q^{\theta_{\mathrm{pre}}}_{x,y}(t)\exp(\{V(y,t)-V(x,t)\}).  
\end{align*}
into \eqref{eq:original2},  we get 
\begin{align*}
    \frac{dV(x,t)}{dt }= \sum_{y \neq x}Q^{\theta_{\mathrm{pre}}}_{x,y}(t)\{1 - \exp(\{V(y,t)-V(x,t)\})\}. 
\end{align*}
By multiplying $\exp(V(x,t))$ to both sides, it reduces to 
\begin{align}\label{eq:backward_kolmogorov}
    \frac{d\exp(V(x,t))}{dt }= \sum_{y \neq x}Q^{\theta_{\mathrm{pre}}}_{x,y}(t)\{ \exp(V(x,t)) - \exp(V(y,t))\}. 
\end{align}
Furthermore, clearly, $V(x,T)=r(x_T)$. Then, the statement is proved by invoking the Kolmogorov backward equation. 

\subsection{Proof of \pref{thm:primary}}  \label{sec:proof_primary}

We define 
\begin{align*}
     H_t(x):=  \exp(V(x,t))p_t(x)/C. 
\end{align*}
We aim to prove that the above satisfies the Kolmogorov forward equation: 
\begin{align*}
   \underbrace{\frac{dH_t(x)}{dt}}_{\text{l.h.s.}} = \underbrace{ \sum_{y\neq x} Q^{\theta^{\star}}_{y,x}(t) H_{t}(y) - \sum_{y\neq x} Q^{\theta^{\star}}_{x,y}(t) H_t(x)}_{\text{r.h.s.}},\quad  p_{\mathrm{ini}}=H_0(\cdot).  
\end{align*}

\rebuttal{
\begin{remark}\label{rem:proof_remark}
Here, note that $p_{\mathrm{ini}}=\delta(\cdot =\mathrm{Mask})=H_0(\cdot)$ holds because we have considered a scenario where the initial is the Dirac delta distribution. When the original initial distribution is stochastic, we will see in \pref{app:initial} that we still have $p_{\mathrm{ini}}=H_0(\cdot)$. 
\end{remark}
}

First, we calculate the l.h.s. Here, recall 
\begin{align*}
     \frac{d\exp(V(x,t))}{dt }= \sum_{y \neq x}Q^{\theta_{\mathrm{pre}}}_{x,y}(t)\{ \exp(V(x,t)) - \exp(V(y,t))\}
\end{align*}
using \eqref{eq:backward_kolmogorov}, and 
\begin{align*}
        \frac{dp_t(x)}{dt}   =  \sum_{y\neq x}Q^{\theta_{\mathrm{pre}}}_{y,x}(t) p_{t}(y) - 
 \sum_{y\neq x}Q^{\theta_{\mathrm{pre}}}_{x,y}(t) p_{t}(x) 
\end{align*}
holds, using the Kolmogorov forward equation. Then, we obtain 
\begin{align*}
    \frac{dH_t(x)}{dt}& =\frac{1}{C}\times \left\{  \frac{d\exp(V(x,t))}{dt}p_t(x) +  \exp(V(x,t))\frac{dp_t(x)}{dt} \right\} \\
     &= \frac{1}{C}\times\left[ \sum_{y \neq x}Q^{\theta_{\mathrm{pre}}}_{x,y}(t)\{\exp(V(x,t)) -  \exp(V(y,t))\}p_t(x) \right]  \\
     & + \frac{1}{C}\times\exp(V(x,t))\left\{ \sum_{y \neq x} Q^{\theta_{\mathrm{pre}}}_{y,x}(t) p_t(y)-\sum_{y \neq x} Q^{\theta_{\mathrm{pre}}}_{x,y}(t) p_t(x) \right \} \\
     & = \frac{1}{C}\times\sum_{y\neq x} Q^{\theta_{\mathrm{pre}}}_{y,x}(t)\exp(V(x,t))p_t(y)   - \frac{1}{C}\times\sum_{y\neq x} Q^{\theta_{\mathrm{pre}}}_{x,y}(t)\exp(V(y,t))p_t(x). 
\end{align*}

On the other hand, the r.h.s. is 
\begin{align*}
   &  \frac{1}{C}\times \left\{ \sum_{y\neq x} Q^{\theta^{\star}}_{y,x}(t) H_t(y) - \sum_{y\neq x}Q^{\theta^{\star}}_{x,y}(t) H_t(x) \right\} \\
   & = \frac{1}{C}\times\sum_{y\neq x} Q^{\theta_{\mathrm{pre}}}_{y,x}(t)\exp(\{V(x,t)-V(y,t)\})H_t(y) -\frac{1}{C} \sum_{y\neq x} Q^{\theta_{\mathrm{pre}}}_{x,y}(t)\exp(\{V(y,t)-V(x,t)\}) H_t(x) \\
 & = \frac{1}{C}\times\sum_{y\neq x} Q^{\theta_{\mathrm{pre}}}_{y,x}(t)\exp(V(x,t))p_t(y)   - \frac{1}{C}\times\sum_{y\neq x} Q^{\theta_{\mathrm{pre}}}_{x,y}(t)\exp(V(y,t))p_t(x). 
\end{align*}
Here, from the first line to the second line, we use 
\begin{align*}
 \forall x\neq y; Q^{\theta^{\star}}_{x,y}(t) = Q^{\theta_{\mathrm{pre}}}_{x,y}(t)\exp(\{V(y,t)-V(x,t)\}).  
\end{align*}

Finally, we can see that $l.h.s.=r.h.s.$ Furthermore, recalling we have an assumption that $p_{\mathrm{ini}}$ is Dirac delta distribution, we clearly have $p_{\mathrm{ini}}=H_0(\cdot)$. Hence, the statement is proved by the Kolmogorov forward equation. 
 
\section{Extension with Stochastic Initial Distributions}\label{app:initial}
When initial distributions are stochastic, we need to modify algorithms. Although there are several strategies to take stochastic initial distributions into account \citep{uehara2024finetuning,domingo2024adjoint}, we consider the strategy of learning the initial distributions \citep{uehara2024finetuning}. 

Hence, we consider the following control problem: 
\begin{align}\label{eq:original_ini}
  \theta^{\star} \textstyle &=  \argmax_{\theta}  \underbrace{\EE_{x_{0:T} \sim P^{\theta},  x_0 \sim P^{\theta}_0} \left [r(x_T)\right]}_{\textrm{Reward term} }   \\
  \textstyle &- \alpha \underbrace{ \left\{ \EE_{x_{0:T} \sim P^{\theta},  x_0 \sim P^{\theta}_0 } \left [\int_{t=0}^T \sum_{y \neq x_t}\left\{Q^{\theta_{\pre}}_{x_t,y}(t)-Q^{\theta}_{x_t,y}(t)+ Q^{\theta}_{x_t,y}(t)\log \frac{Q^{\theta}_{x_t,y}(t) }{Q^{\theta_{\pre} }_{x_t,y}(t)}\right\} dt \right] +  \mathrm{KL}(P^{\theta}_0 \mid p_{\mathrm{lim}}  ) \right\} }_{\textrm{KL term}}. \nonumber 
\end{align}
Compared to the control problem \eqref{eq:original}, in our new formulation \eqref{eq:original_ini}, we parameterize not only generators but also initial distributions. In this case, \pref{thm:key} still holds. 

\subsection{Proof}

Recalling the definition of value functions: 
\begin{align*}
   V(k,t) = \EE_{x_{k:T} \sim P^{\theta}} \left [r(x_T) - \alpha \int_{t=k}^T \sum_{y \neq x_t}\left\{Q^{\theta_{\pre}}_{x_t,y}(t)-Q^{\theta}_{x_t,y}(t)+ Q^{\theta}_{x_t,y}(t)\log \frac{Q^{\theta}_{x_t,y}(t) }{Q^{\theta_{\pre} }_{x_t,y}(t)}\right\} dt  \right], 
\end{align*}
the optimal initial distribution needs to satisfy 
\begin{align*}
   \argmax_{\theta }  \EE_{ x_0 \sim P^{\theta}_0 }[V_0(x_0)]-\alpha \mathrm{KL}(P^{\theta}_0 \mid p_{\mathrm{lim}}  ).
\end{align*}
This is equal to the distribution proportional to 
\begin{align*}
   \exp(V_0(x_0)/\alpha) p_{\mathrm{lim}}(x_0).  
\end{align*}
The rest of the proof holds as in the proof of \pref{sec:proof_primary} without any change, referring to \pref{rem:proof_remark}. 

\section{Details of Algorithm}
\label{app:alg}
\subsection{Straight-Through Gumbel Softmax}
We apply the straight-through Gumbel softmax estimator to the last time step, i.e. 
$$\text{ST}(x^{(i)}_T) := \bar x^{(i)}_T + \text{SG}(x^{(i)}_T - \bar x^{(i)}_T)$$

where $x^{(i)}_T$ is the corresponding Gumbel-max variable, i.e. $x^{(i)}_T = \argmax_{x \in \Xcal} [\bar x^{(i)}_T]_x$, and $\text{SG}$ denotes stop gradient. Then, $\text{ST}(x^{(i)}_T)$ is input into the reward function $r(.)$ instead of $\bar x^{(i)}_T$ for forward and backward propagation. 

We observe a boost in fine-tuning performance with the straight-through Gumbel softmax, as converting the input to $r(.)$ into a one-hot vector makes it better aligned with the reward oracle's training distribution.

\subsection{Simplified Formula of $g(\theta)$}
\label{app:alg.kl}
The key objective function in \alg, $g(\theta)$, can be further simplified for the masked diffusion models that we utilized in the experiments.
   \begin{align*}  
    g(\theta) = \frac{1}{B}\sum_{i=1}^B \left [r(\bar x^{(i)}_T)-  \frac{\alpha}{T}\sum_{t=1}^T \sum_{x\in \Xcal} [\bar x^{(i)}_{t-1}]_x \sum_{\substack{y\in \Xcal\\y \neq x} }\left\{ -Q^{\theta}_{x,y}(t)+Q^{\theta_{\pre} }_{x,y}(t)+ Q^{\theta}_{x,y}(t)\log \frac{Q^{\theta}_{x,y}(t) }{Q^{\theta_{\pre} }_{x,y}(t)}\right\} \right]
    \end{align*}

We denote the second term estimating the KL divergence with the $i$-th sample as $k_i(\theta)$:
\begin{align*}  
    k^{(i)}(\theta) =\frac{1}{T}\sum_{t=1}^T \sum_{x\in \Xcal} [\bar x^{(i)}_{t-1}]_x \sum_{\substack{y\in \Xcal\\y \neq x} }\left\{ -Q^{\theta}_{x,y}(t)+Q^{\theta_{\pre} }_{x,y}(t)+ Q^{\theta}_{x,y}(t)\log \frac{Q^{\theta}_{x,y}(t) }{Q^{\theta_{\pre} }_{x,y}(t)}\right\} 
\end{align*}

When $x=\text{Mask}$, the value of $Q_{x,y}(t)$ is irrelevant to the parametrization $\theta$, i.e. 
\begin{align}
    Q^{\theta}_{x,y}(t) =  Q^{\theta_{\pre}}_{x,y}(t)=\begin{cases}
        0, y \neq \text{Mask}\nonumber\\
        -\gamma, y = \text{Mask}\nonumber
    \end{cases}
\end{align}
where $\gamma$ is a constant related to the forward process schedule~\citep{sahoo2024simple}. In particular, when applying a linear schedule (as in our experiments), $\gamma=1/t$. Thus, the corresponding KL divergence component equals 0.

When $x\neq \text{Mask}$,
\begin{align}
    Q^{\theta}_{x,y}(t) =\begin{cases}
        0, y \neq \text{Mask}\nonumber\\
        \gamma\EE_{\theta}[x_0=x|x_{t-1}=\mathrm{Mask}], y= \text{Mask}\nonumber
    \end{cases}
\end{align}

Denote $\EE_{\theta}[x_0=x|x_{t-1}=\mathrm{Mask}]$ as $[\hat{x}_0^{\theta}]_{x}$.
The KL divergence term $k^{(i)}(\theta)$ can be simplified as
\begin{align}
    k^{(i)}(\theta) &= \frac{1}{T}\sum_{t=1}^T \sum_{x\in \Xcal} [\bar x^{(i)}_{t-1}]_x \sum_{\substack{y\in \Xcal\\y \neq x} }\left\{ -Q^{\theta}_{x,y}(t)+Q^{\theta_{\pre} }_{x,y}(t)+ Q^{\theta}_{x,y}(t)\log \frac{Q^{\theta}_{x,y}(t) }{Q^{\theta_{\pre} }_{x,y}(t)}\right\} \nonumber\\
    &= \frac{1}{T}\sum_{t=1}^T \sum_{x\in \Xcal} [\bar x^{(i)}_{t-1}]_x \left\{  -Q^{\theta}_{x,\text{Mask}}(t)+Q^{\theta_{\pre} }_{x,\text{Mask}}(t)+ Q^{\theta}_{x,\text{Mask}}(t)\log \frac{Q^{\theta}_{x,\text{Mask}}(t) }{Q^{\theta_{\pre} }_{x,\text{Mask}}(t)} \right\}\nonumber\\
    &= \frac{\gamma}{T}\sum_{t=1}^T \sum_{\substack{x\in \Xcal\\x\neq \text{Mask}}} [\bar x^{(i)}_{t-1}]_x  \left\{  -[\hat{x}_0^{\theta}]_{x} +[\hat{x}_0^{\theta_{\pre}}]_{x}+ [\hat{x}_0^{\theta}]_{x}\log \frac{ [\hat{x}_0^{\theta}]_{x} }{[\hat{x}_0^{\theta_{\pre}}]_{x}} \right\}\nonumber
\end{align}

The simplified formula reduces the computational complexity of calculating $k^{(i)}(\theta)$ to $O(NT)$.

\subsection{Schedule of Gumbel Softmax Temperature}
We use a linear schedule for the Gumbel softmax temperature $\tau$, decreasing over time as $\tau \sim 1/t$. In early time steps, the temperature is higher, introducing more uncertainty, while later steps have a lower temperature, approximating the true distribution more closely. 
This improves the fine-tuning procedure as the input becomes closer to clean data at later time steps and the uncertainty of model prediction is reduced.

\section{Experimental Details and Additional Results}
\label{app:exp}

In this section, we first provide a more detailed explanation of the baseline. We then discuss additional results and present more detailed settings for both regulatory DNA sequence design and protein sequence design. \cameraready{Finally, we present results of \alg\ in the natural language domain for toxicity-controlled text generation. We use 1 NVIDIA A100 80GB GPU for all experiments.}

\subsection{Baselines}
\label{app:exp.baseline}
In this section, we provide a detailed overview of each baseline method.
\begin{itemize}
    \item \textbf{Guidance-based Methods.} Guidance-based methods are based on the pretrained model while adjusting during the sampling process according to the targeted property. This leads to longer inference time compared to fine-tuning approaches.
    \begin{itemize}
        \item \textbf{CG}~\citep{nisonoff2024unlocking}. CG adjusts the transition rate of CTMC by calculating the predictor guidance:
        $$Q_{x,y|r}(t)=\frac{p(r|y,t)}{p(r|x,t)} Q_{x,y}(t)$$
        where $r$ is the target property, and the predictor guidance is further approximated using a Taylor expansion, i.e.
        $$\log \frac{p(r|y,t)}{p(r|x,t)} \approx (y-x)^T \nabla_x \log p(r|x,t)$$
        The predictor $p(r|x,t)$ is estimated using the posterior mean approach \citep{chung2022diffusion}, where the pretrained model is first utilized to estimate the clean data from the noisy input $x_t$, and then the reward oracle is applied to the predicted clean sequence.  { We remark that the above Taylor approximation doesn't have formal theoretical guarantees, considering that $x$ is discrete. This could be a reason why it does not work well in the case of protein-inverse folding in \pref{sec:protein_seq}.}  
        \item \textbf{SMC}~\citep{wu2024practical}. SMC is a sequential Monte Carlo-based approach that uses the pretrained model as the proposal distribution. { While it was originally designed for conditioning rather than reward maximization, it can be adapted for reward maximization by treating rewards as classifiers. In our experiment, we use this adapted version. }
        \item \textbf{TDS}~\citep{wu2024practical}. Similar to SMC, TDS also applies sequential Monte Carlo, but utilizes CG rather than the pretrained model as the proposal.
    \end{itemize}
    \item \textbf{Classifier-free Guidance (CFG)}~\citep{ho2022classifier}. Unlike guidance-based methods, CFG trains a conditional generative model from scratch and does not rely on the pretrained model. To generate sequences $x$ with desired properties $r(x)$, CFG incorporates $r(x)$ as an additional input to the diffusion model and generates samples conditioning on high $r(x)$ values. Specifically, binary labels of $r(x)$ are constructed according to the $95\%$ quantile, and sampling is done conditioned on the label corresponding to high values of $r(x)$.

    It is important to note that CFG requires labeled data pairs $\{x,r(x)\}$ for training, 
    which can limit its performance in cases with limited labeled data, especially when the pretrained model is already a conditional diffusion model $p(x|c)$. For example, in the protein inverse folding task, where $x$ is the protein sequence, $c$ is the protein structure, and $r(x)$ is the protein stability, CFG struggles, as shown in Table~\pref{table:protein}.
    This is due to the small size of the Megascale dataset (containing only a few hundred different protein structures), which reduces its capability and generalizability\footnote{In contrast, other methods (guidance-based methods and fine-tuning methods) leverage the pretrained model trained on the much larger PDB dataset ($\sim 23,000$ structures) and achieve better performance.}. While data augmentation can be applied to construct additional training data, it is resource-intensive, requires significant case-by-case design, and is beyond the scope of this work.
    For the DNA sequence design task, since all sequences in the dataset are labeled, there is no such issue.
\end{itemize}

\subsection{Regulatory DNA Sequence Design}
\label{app:exp.dna}

In this section, we first outline the training process for reward oracles and pre-trained models in the regulatory DNA sequence design experiment. Subsequently, we provide additional explanations on fine-tuning setups and present further results.

\textbf{Reward Oracle.} We train reward oracles to predict activity levels of enhancers in the HepG2 cell line using the dataset from \citet{gosai2023machine}. Following standard practice \citep{lal2024reglm}, we split the dataset into two subsets based on chromosomes, with each containing enhancers from half of the 23 human chromosomes. We train two reward oracles on each subset independently using the Enformer~\citep{avsec2021effective} architecture initialized with its pretrained weights. One oracle is used for fine-tuning, while the other is reserved for evaluation (i.e. \textit{Pred-Activity} in Table~\ref{table:dna}). Denote the subset used for training the fine-tuning oracle as FT and the subset for training the evaluation oracle as Eval. Table~\ref{table:dna.oracle} presents the model performance for both oracles on each subset. Both oracles perform similarly, achieving a high Pearson correlation ($>0.85$) on their respective held-out sets (Eval for the fine-tuning oracle and FT for the evaluation oracle).

\begin{table}[!th]
    \centering
    \caption{Performance of the reward oracles for predicting HepG2 activity of enhancer sequences.}
    \label{table:dna.oracle}
    \small
    \setlength{\tabcolsep}{4pt}
    \renewcommand{\arraystretch}{1.08}
    \begin{adjustbox}{max width=0.98\textwidth}
        \begin{tabular}{l ccc}
            \toprule
            Model & Eval Dataset & MSE $\downarrow$ & Pearson Corr $\uparrow$ \\
            \midrule
            Fine-Tuning Oracle & FT & 0.149 & 0.938 \\
             & Eval & 0.360 & 0.869 \\
            \midrule
            Evaluation Oracle & FT & 0.332 & 0.852 \\
             & Eval & 0.161 & 0.941 \\
            \bottomrule
        \end{tabular}
    \end{adjustbox}
\end{table}

\textbf{Pretrained Model.} We pretrain the masked discrete diffusion model~\citep{sahoo2024simple} on the full dataset of~\citet{gosai2023machine}, using the same CNN architecture as in \citet{stark2024dirichlet} and a linear noise schedule. Other hyperparameters are kept identical to those in \citet{sahoo2024simple}. To assess the model's ability to generate realistic enhancer sequences, we sample 1280 sequences and compare them with 1280 randomly selected sequences from the original dataset. \pref{fig:pretrain} presents the distribution of HepG2 activity predicted by either the fine-tuning (FT) or evaluation (Eval) oracle for both the generated and original sequences, along with the true observations for the original sequences. The activity levels of the generated sequences align well with those of the original dataset, indicating the effectiveness of pretrained model in generating in-distribution enhancer sequences. Furthermore, \pref{fig:pretrain.kmer} shows the 3-mer and 4-mer Pearson correlation between the synthetic and original sequences, both of which exceed 0.95, further validating the model's performance.

\begin{figure}[!t]
    \centering
    \includegraphics[width=0.61\linewidth]{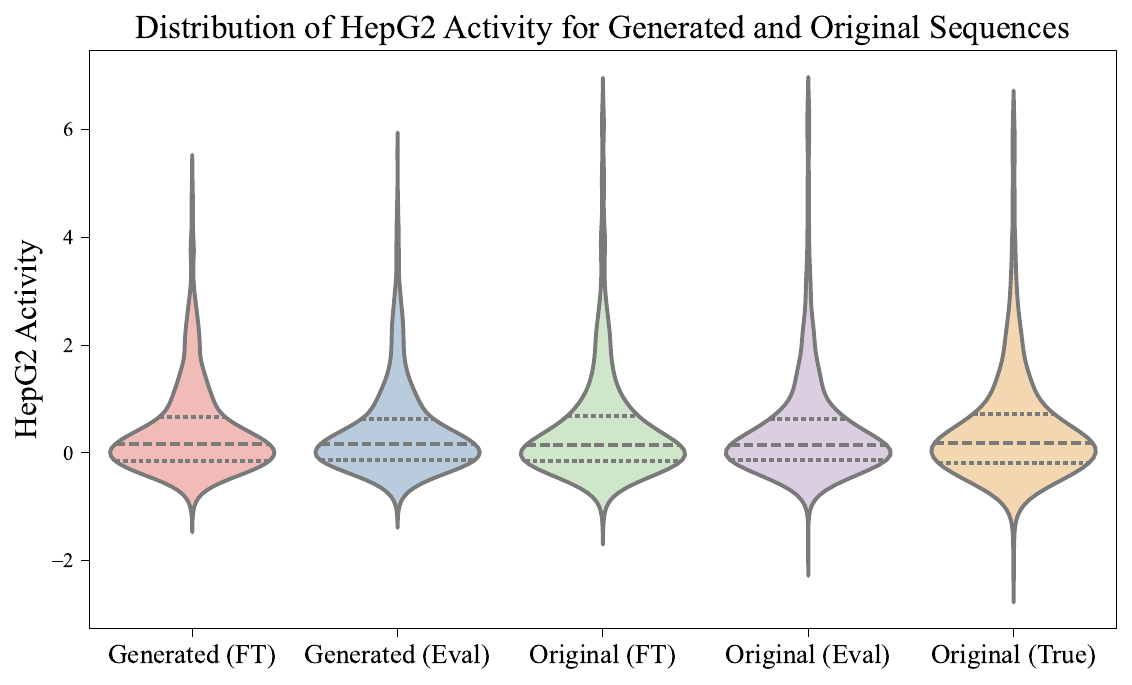}
    \includegraphics[width=0.37\linewidth]{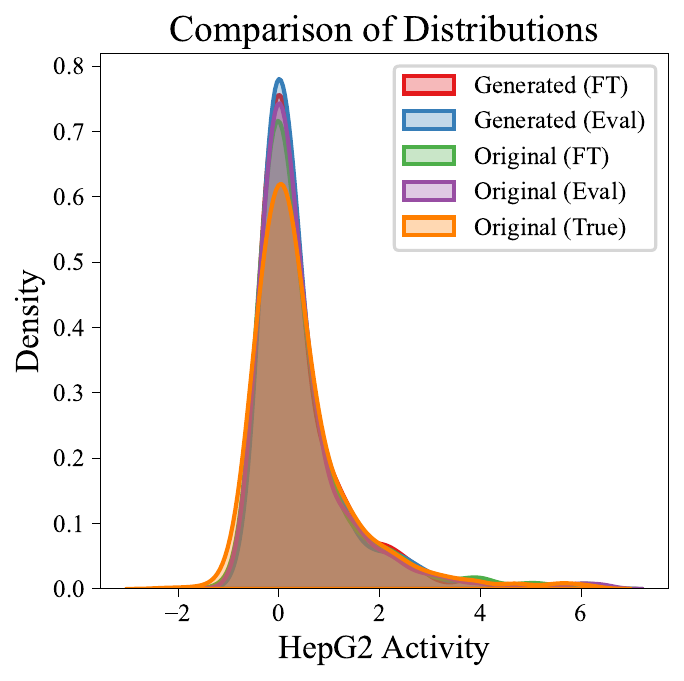}
    \caption{Comparison of HepG2 activity distributions between original sequences and those generated by the pretrained model. The activity distributions match closely with each other.}
    \label{fig:pretrain}
\end{figure}

\begin{figure}[!t]
    \centering
    \includegraphics[width=0.33\linewidth]{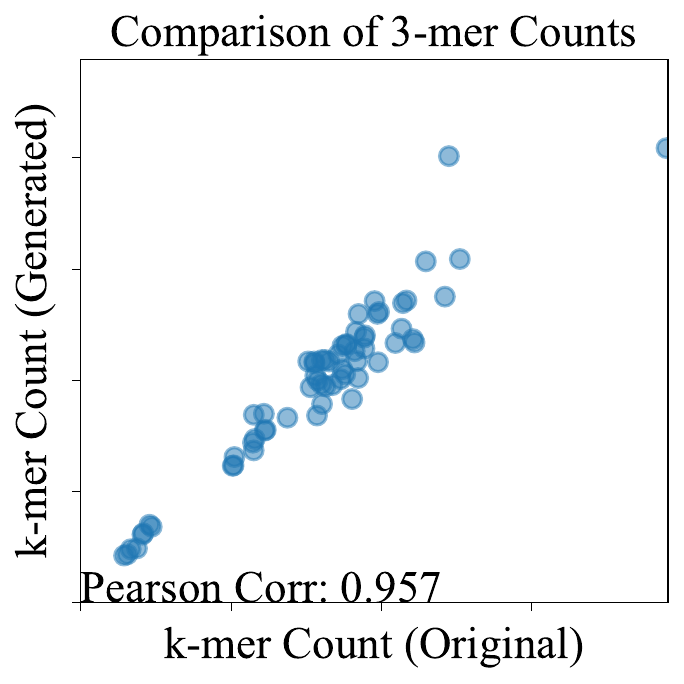}
    \includegraphics[width=0.33\linewidth]{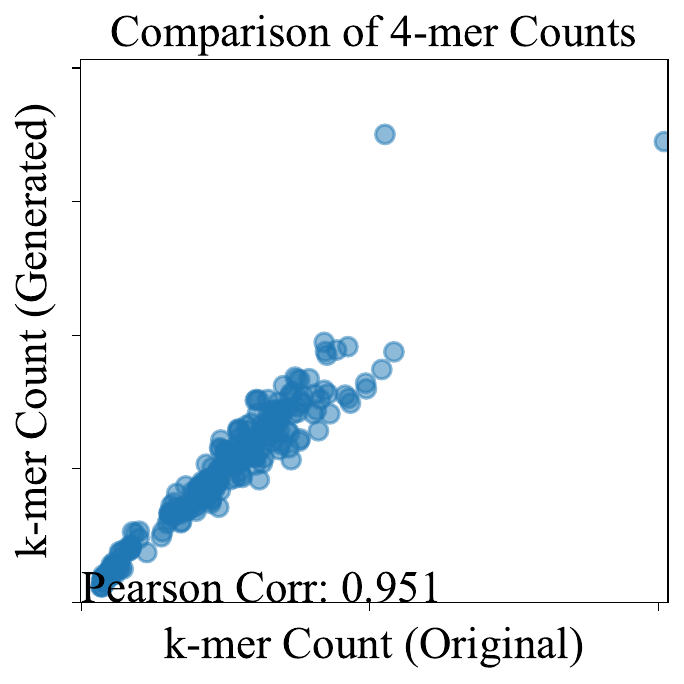}
    \caption{3-mer and 4-mer Pearson correlation between the original and generated sequences.}
    \label{fig:pretrain.kmer}
\end{figure}

\textbf{Fine-tuning Setup.} We utilize the pretrained masked discrete diffusion model and the fine-tuning oracle described above for fine-tuning. During \alg's stage 1 data collection, sequences are generated from the model over 128 steps. We set $\alpha=0.001$ to govern the strength of the KL regularization and truncate the backpropagation at step 50. \cameraready{The base Gumbel Softmax temperature is set to 1.0.} The model is fine-tuned with 128 samples as a batch (32 samples per iteration, with gradient accumulated over 4 iterations) for 1000 steps. For \alg\ w/o KL, we follow the same setup, but set $\alpha$ to zero. For evaluation, we generate 640 sequences per method (with a batch size of 64 over 10 batches) for each random seed. We report the mean and standard deviation of model performance across 3 random seeds.

\textbf{Additional Results for Fine-Tuning.} Along with the median Pred-Activity values shown in Table~\pref{table:dna}, \pref{fig:finetune.violin} presents the full distribution of Pred-Activity for each method, which shows consistent patterns as Table~\pref{table:dna}.
\begin{figure}[!th]
    \centering
    \includegraphics[width=0.8\linewidth]{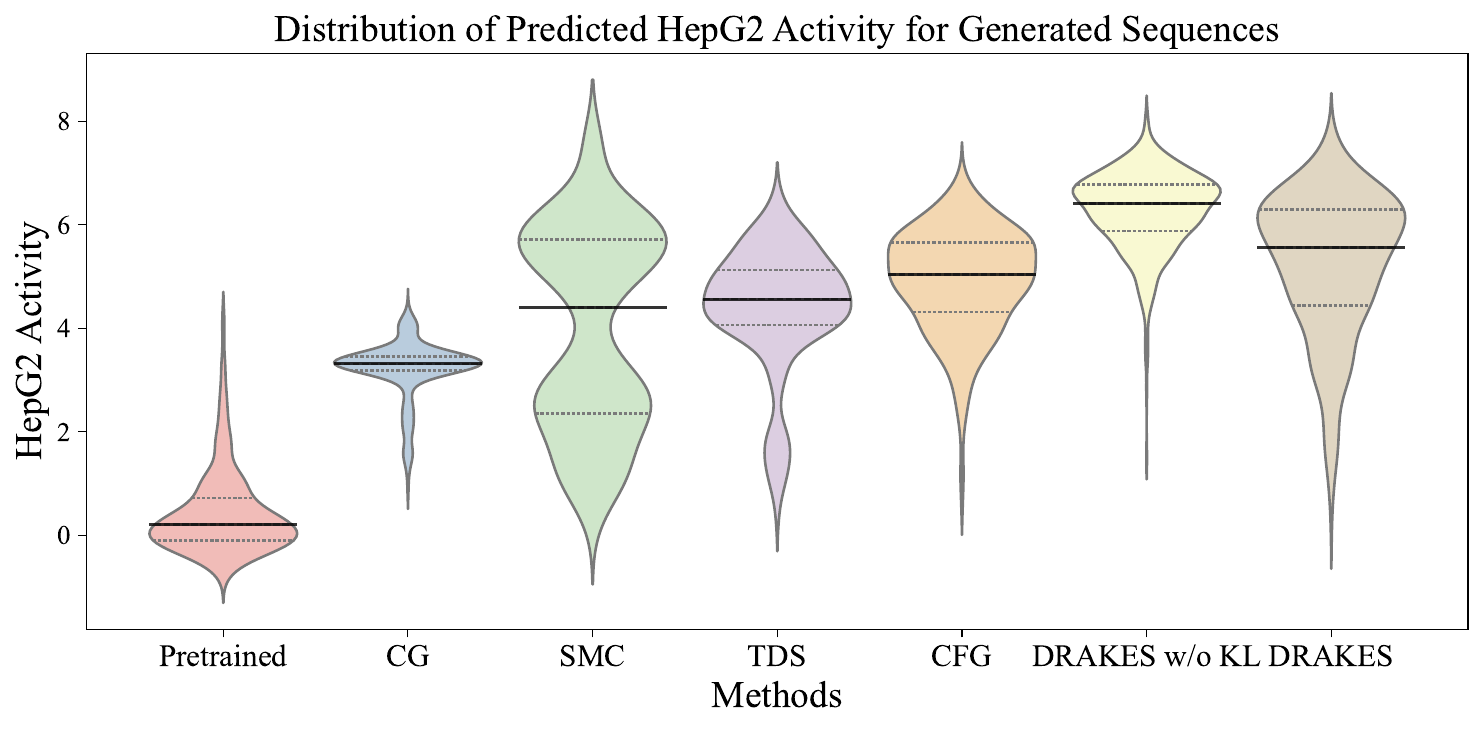}
    \caption{Distribution of Pred-Activity for the generated sequences of each method.}
    \label{fig:finetune.violin}
\end{figure}

\rebuttal{\textbf{Ablation Study on Gradient Truncation Number.} To show the impact of the gradient truncation module, we conduct an ablation study on the gradient truncation number. As shown in Table~\ref{table:dna.ablation}, when the truncation number is small, the model cannot effectively update the early stage of the sampling process, leading to suboptimal performance. Meanwhile, when the truncation number is large, the memory cost of the model fine-tuning increases, and the optimization is harder due to the gradient accumulation through the long trajectory. Therefore, an intermediate level of gradient truncation leads to the best performance.}

\begin{table}[!th]
    \centering
    \caption{\rebuttal{Ablation study on gradient truncation number on regulatory DNA sequence design.}}
    \label{table:dna.ablation}
    \small
    \setlength{\tabcolsep}{4pt}
    \renewcommand{\arraystretch}{1.08}
    \begin{adjustbox}{max width=0.98\textwidth}
        \begin{tabular}{c ccccc}
            \toprule
            Truncation Number & Pred-Activity (median)\,$\uparrow$ & ATAC-Acc\,$\uparrow$ (\%) & 3-mer Corr\,$\uparrow$ & JASPAR Corr\,$\uparrow$ & App-Log-Lik (median)\,$\uparrow$ \\
            \midrule
            25 & 6.17 & 95.9 & 0.569 & 0.798 & -278 \\

            \textbf{50} & 5.61 & 92.5 & 0.887 & 0.911 & -264 \\
            75 & 4.61 & 52.8 & 0.238 & 0.644 & -268 \\
            \bottomrule
        \end{tabular}
    \end{adjustbox}
\end{table}

\rebuttal{\textbf{Ablation Study on Gumbel Softmax Temperature Schedule.} We conduct an ablation study with different temperature schedules of Gumbel Softmax, i.e. a linear schedule and a constant schedule. As shown in Table~\ref{table:dna.ablation_temp}, both schedules achieve similar performance, indicating the robustness of our method. In the performance reported in Table~\ref{table:dna}, we utilize the linear schedule.}

\begin{table}[!th]
    \centering
    \caption{\rebuttal{Ablation study on Gumbel Softmax temperature schedule on regulatory DNA sequence design.}}
    \label{table:dna.ablation_temp}
    \small
    \setlength{\tabcolsep}{4pt}
    \renewcommand{\arraystretch}{1.08}
    \begin{adjustbox}{max width=0.98\textwidth}
        \begin{tabular}{c ccccc}
            \toprule
            Temperature Schedule & Pred-Activity (median)\,$\uparrow$ & ATAC-Acc\,$\uparrow$ (\%) & 3-mer Corr\,$\uparrow$ & JASPAR Corr\,$\uparrow$ & App-Log-Lik (median)\,$\uparrow$ \\
            \midrule
            Constant & 5.89 & 91.6 & 0.852 & 0.914 & -258 \\

            Linear & 5.61 & 92.5 & 0.887 & 0.911 & -264 \\
            \bottomrule
        \end{tabular}
    \end{adjustbox}
\end{table}

\subsection{Protein Inverse Folding}
\label{app:exp.protein}

We first discuss the setup of datasets used for training reward models. Next, we present the performance of the pre-trained model, followed by an evaluation of the reward models. Finally, we describe the fine-tuning setup and provide additional results.

\textbf{Dataset Curation.} We utilize the large-scale protein stability dataset, Megascale~\citep{tsuboyama2023mega} for the protein inverse folding experiment, which contains stability measurements for $\sim 1.8$M sequence variants (for example, single mutants and double mutants) from 983 protein domains.
We follow the dataset curation and train-validation-test splitting procedure from~\citet{widatalla2024aligning}. Specifically, the wild-type protein structures are clustered with Foldseek clustering and the data is split based on clusters. We then drop a few proteins with ambiguous wild type labels, and clip the $\Delta G$ values that are outside the dynamic range of the experiment ($>5$ or $<1$) to the closest measurable value ($5$ or $1$) as in \citet{nisonoff2024unlocking}. We further exclude proteins where a significant proportion of the corresponding variants' $\Delta G$ measurements fall outside the experimental range. The final dataset consists of 438,540 sequence variants from 311 proteins in the training set, 15,182 sequences from 10 proteins in the validation set, and 23,466 sequences from 12 proteins in the test set.

\textbf{Pretrained Model.} We pretrain an inverse folding model using the discrete flow model loss from \citep{campbell2024generative} and the ProteinMPNN~\citep{dauparas2022robust} architecture to encode both sequence and structure as model input. The model is trained on the PDB training set used in \citet{dauparas2022robust}, containing 23,349 protein structures and their ground truth sequences, which is distinct from the dataset in \citet{tsuboyama2023mega}. We first evaluate the effectiveness of the inverse folding model on the PDB test set in \citet{dauparas2022robust}, which has 1,539 different proteins. As in \citet{nisonoff2024unlocking}, we set the temperature during sampling to be 0.1, and randomly sample one sequence conditioned on each structure for both our pretrained discrete flow model and the de facto inverse folding method, ProteinMPNN. As shown in Table~\pref{table:protein.pretrain}, the pretrained model performs similarly to ProteinMPNN, achieving comparable sequence recovery rate.

\begin{table}[!th]
    \centering
    \caption{Model performance of protein inverse folding on PDB test set.}
    \label{table:protein.pretrain}
    \small
    \setlength{\tabcolsep}{4pt}
    \renewcommand{\arraystretch}{1.08}
    \begin{adjustbox}{max width=0.98\textwidth}
        \begin{tabular}{l c}
            \toprule
            Method & Sequence Recovery Rate (\%)\,$\uparrow$ \\
            \midrule
            ProteinMPNN & 47.9 \\
            Discrete Flow Model & 48.6 \\
            \bottomrule
        \end{tabular}
    \end{adjustbox}
\end{table}

We further evaluate the generalizability of the pretrained model to the proteins in the Megascale dataset. Results on both Megascale training and test set are shown in Table~\pref{table:protein.pretrain.mega}. We calculate the self-consistency RMSD (scRMSD) to assess how well a generated sequence folds into the desired structure. Specifically, the generated sequences are folded into 3D structures using ESMFold~\citep{lin2023evolutionary}, and scRMSD is calculated as their RMSD relative to the original backbone structure we are conditioning on. An scRMSD lower than $2\mathring{A}$ is typically considered a successful inverse folding~\citep{nisonoff2024unlocking,campbell2024generative}. As shown in Table~\pref{table:protein.pretrain.mega}, the pretrained model achieves a similar sequence recovery rate on Megascale as the PDB test set and low scRMSD, with a success rate greater than $90\%$, indicating its effectiveness on the inverse folding task. 

\begin{table}[!th]
    \centering
    \caption{Model performance of protein inverse folding on Megascale proteins.}
    \label{table:protein.pretrain.mega}
    \small
    \setlength{\tabcolsep}{4pt}
    \renewcommand{\arraystretch}{1.08}
    \begin{adjustbox}{max width=0.98\textwidth}
        \begin{tabular}{l ccc}
            \toprule
            Eval Dataset & Sequence Recovery Rate (\%)\,$\uparrow$ & scRMSD ($\mathring{A}$) $\downarrow$ & \%(scRMSD$<2$)(\%)\,$\uparrow$  \\
            \midrule
            Megascale-Train & 47.0 & 0.825 & 95.0 \\
            Megascale-Test & 44.0 & 0.849 & 90.9 \\
            \bottomrule
        \end{tabular}
    \end{adjustbox}
\end{table}

\textbf{Reward Oracle.} We train the reward oracles on the Megascale dataset using the ProteinMPNN architecture. The oracles take both the protein sequence and the corresponding wild-type structure as input to predict the stability of the sequence, measured by $\Delta \Delta G$ (calculated as the difference in $\Delta G$ between the variant and the wild-type from the dataset). Similar to \citet{nisonoff2024unlocking}, the final layer of the ProteinMPNN architecture is mean-pooled and mapped to a single scalar with a 2-layer MLP, and the model weights before the mean-pooling are initialized with the weights from the pretrained inverse folding model.

Similar to the practice in the enhancer design experiment, we train two oracles -- one for fine-tuning and one for evaluation. The fine-tuning oracle is trained on Megascale training set. We select the best epoch based on validation set performance, and report the Pearson correlation on both Megascale training and test set in Table~\pref{table:protein.oracle}. The performance gap between the training and test sets highlights the difficulty of generalizing to unseen protein structures in this task. 

The evaluation oracle is trained on the complete dataset (train+val+test). To attain the best hyperparameters, we randomly split the full dataset into two subsets, an in-distribution set for training, denoted as I, and an out-of-distribution set for validation, denoted as O. Note that here the evaluation oracle is trained part of the variants of \textit{all} wild-type proteins (i.e. Megascale-Train-I \& Megascale-Val-I \& Megascale-Test-I), and the out-of-distribution set contains unseen sequence variants, but no new structures. The Pearson correlation on each subset is presented in Table~\pref{table:protein.oracle}. It achieves much higher correlations than the fine-tuning oracle, indicating good generalizability of the evaluation oracle to new sequences of in-distribution protein structures. For the final evaluation oracle used to calculate results in Table~\pref{table:protein}, we train it on the full dataset using the best hyperparameters selected as discussed. It achieves a Pearson correlation of 0.951 on Megascale training set and 0.959 on Megascale test set (both being in-distribution for the evaluation oracle).

\begin{table}[!th]
    \centering
    \caption{Performance of the reward oracles for predicting stability conditioned on protein sequence and structure, across a variety of Megascale subsets.}
    \label{table:protein.oracle}
    \small
    \setlength{\tabcolsep}{4pt}
    \renewcommand{\arraystretch}{1.08}
    \begin{adjustbox}{max width=0.98\textwidth}
        \begin{tabular}{l ccc}
            \toprule
            Model & Eval Dataset & Pearson Corr $\uparrow$ \\
            \midrule
            Fine-Tuning Oracle & Megascale-Train & 0.828 \\
             & Megascale-Test & 0.685 \\
            \midrule
            & Megascale-Train-I & 0.948 \\
            Evaluation Oracle & Megascale-Train-O & 0.942 \\
             & Megascale-Test-I & 0.955 \\
             & Megascale-Test-O & 0.920 \\
            \bottomrule
        \end{tabular}
    \end{adjustbox}
\end{table}

\textbf{Finetuning Setup.} We utilize the pretrained inverse folding model and the fine-tuning oracle described above for fine-tuning. During \alg's stage 1 data collection, we generate sequences from the model over 50 steps. We set $\alpha=0.0003$ and truncate the backpropagation at step 25. \cameraready{The base Gumbel Softmax temperature is set to 0.5.} The model is finetuned with proteins in Megascale training set with batch size 128 (16 samples per iteration, with gradient accumulated over 8 iterations) for 100 epochs. For \alg\ w/o KL, we follow the same setup, but set $\alpha$ to zero. The model is evaluated on Megascale test set, where we generate 128 sequences conditioned on each protein structure for every method (with a batch size of 16 over 8 batches) and each random seed. We report the mean and standard deviation of model performance across 3 random seeds.

\textbf{Evaluation Oracle Accounts for Over-Optimization.} As discussed in~\pref{sec:exp.dna}, for the enhancer design experiment, significant over-optimization occurs when evaluating Pred-Activity, even with an evaluation oracle trained on distinct data unseen during fine-tuning. In contrast, the protein inverse folding experiment largely mitigates this issue. Table~\pref{table:protein.ft} shows the median values of Pred-ddG for the generated sequences based on both the evaluation oracle (same as those reported in Table~\pref{table:protein}) and the fine-tuning oracle. Although \alg\ w/o KL shows significantly higher Pred-ddG than \alg\ with the fine-tuning oracle, their performance with the evaluation oracle remains similar, suggesting less pronounced over-optimization in evaluation.
This is because enhancer sequences are relatively homogeneous, and even though we split based on chromosomes, each chromosome still has similar regions. However, protein structures are more distinct, and training on different proteins creates unique model landscapes.

\begin{table}[!th]
    \centering
    \caption{Model performance on protein inverse folding, with Pred-ddG calculated using either the evaluation oracle (Eval) or the fine-tuning oracle (FT).}
    \label{table:protein.ft}
    \small
    \setlength{\tabcolsep}{4pt}
    \renewcommand{\arraystretch}{1.08}
    \begin{adjustbox}{max width=0.95\textwidth}
        \begin{tabular}{l cccc}
            \toprule
            Method & Pred-ddG-Eval (median)\,$\uparrow$ & Pred-ddG-FT (median)\,$\uparrow$ \\
            \midrule
            Pretrained & -0.544(0.037) & 0.161(0.012)  \\
            CG & -0.561(0.045) & 0.158(0.017) \\
            SMC & 0.659(0.044) & 0.543(0.013) \\
            TDS & 0.674(0.086) & 0.557(0.005) \\
            CFG & -1.159(0.035)  & -1.243(0.013)   \\
            \midrule
            \alg \ w/o KL & 1.108(0.004) & 0.833(0.000) \\
            \alg & 1.095(0.026) & 0.702(0.002) \\
            \bottomrule
        \end{tabular}
    \end{adjustbox}
\end{table}

\rebuttal{\textbf{Ablation Study on Gradient Truncation Number.} To show the impact of the gradient truncation module, we conduct an ablation study on the gradient truncation number. The results are shown in Table~\ref{table:protein.ablation}. While the model performance is robust with different truncation numbers, an intermediate level of gradient truncation leads to the best performance.}

\newcolumntype{a}{>{\columncolor{Gray}}c}

\begin{table}[!th]
    \centering
    \caption{\rebuttal{Ablation study on gradient truncation number on inverse protein folding.}}
    \label{table:protein.ablation}
    \small
    
    \setlength{\tabcolsep}{4pt}
    \renewcommand{\arraystretch}{1.08}
    \begin{adjustbox}{max width=0.98\textwidth}
        \begin{tabular}{c cccca}
            \toprule
            Truncation Number & Pred-ddG (median)\,$\uparrow$ & \%(ddG$>0$) (\%)\,$\uparrow$ & scRMSD (median)\,$\downarrow$ & \%(scRMSD$<2$)(\%)\,$\uparrow$  & Success Rate (\%)\,$\uparrow$ \\
            \midrule
            
            15 & 0.977 & 83.2 & 0.852 & 92.4 & 76.0 \\
            \textbf{25} & 1.095 & 86.4 & 0.918 & 91.8 & 78.6 \\
            35 & 1.033 & 84.8 & 0.868 & 92.7 & 77.7 \\
            
            \bottomrule
        \end{tabular}
    \end{adjustbox}
\end{table}

\rebuttal{\textbf{Ablation Study on Gumbel Softmax Temperature Schedule.} We conduct an ablation study with different temperature schedules of Gumbel Softmax, i.e. a linear schedule and a constant schedule. As shown in Table~\ref{table:protein.ablation2}, both schedules achieve similar performance, indicating the robustness of our method. In the performance reported in Table~\ref{table:protein}, we utilize the linear schedule.}

\begin{table}[!th]
    \centering
    \caption{\rebuttal{Ablation study on Gumbel Softmax temperature schedule on inverse protein folding.}}
    \label{table:protein.ablation2}
    \small
    
    \setlength{\tabcolsep}{4pt}
    \renewcommand{\arraystretch}{1.08}
    \begin{adjustbox}{max width=0.98\textwidth}
        \begin{tabular}{c cccca}
            \toprule
            Temperature Schedule & Pred-ddG (median)\,$\uparrow$ & \%(ddG$>0$) (\%)\,$\uparrow$ & scRMSD (median)\,$\downarrow$ & \%(scRMSD$<2$)(\%)\,$\uparrow$  & Success Rate (\%)\,$\uparrow$ \\
            \midrule
            
            Constant & 1.178 & 86.5 & 0.897 & 93.2 & 80.3 \\
            Linear & 1.095 & 86.4 & 0.918 & 91.8 & 78.6 \\
            
            \bottomrule
        \end{tabular}
    \end{adjustbox}
\end{table}

\rebuttal{\textbf{Results with pLDDT.} In addition to scRMSD, we further measure the self-consistency of the generated sequences using pLDDT. The results are shown in Table~\ref{table:protein.plddt}. Following the common practice as in \citet{widatalla2024aligning}, we utilize 80 as the cutoff threshold and define a success generation as having positive predicted stability (i.e., Pred-ddG$>$0) and confident ESMFold-predicted structure (i.e., pLDDT $>$ 80). The results align well with those in Table~\ref{table:protein} using scRMSD. \alg\ significantly outperforms all baseline methods in terms of the overall success rate.}

\begin{table}[!th]
    \centering
    \caption{\rebuttal{Model performance on inverse protein folding. \alg\ generates protein sequences that have both high stability and high confidence in ESMFold predicted structures, outperforming baselines in the overall success rate. We report the mean across 3 random seeds, with standard deviations in parentheses.}}
    \label{table:protein.plddt}
    \small
    
    \setlength{\tabcolsep}{4pt}
    \renewcommand{\arraystretch}{1.08}
    \begin{adjustbox}{max width=0.98\textwidth}
        \begin{tabular}{l cccca}
            \toprule
            Method & Pred-ddG (median)\,$\uparrow$ & \%(ddG$>0$) (\%)\,$\uparrow$ & pLDDT (median)\,$\uparrow$ & \%(pLDDT$>80$)(\%)\,$\uparrow$  & Success Rate (\%)\,$\uparrow$ \\
            \midrule
            Pretrained & -0.544(0.037) & 36.6(1.0) & 87.9(0.0) & \textbf{87.5(0.1)} & 33.8(0.8) \\
            CG & -0.561(0.045) & 36.9(1.1) & 87.9(0.0) & 87.5(0.2) & 34.3(0.5) \\
            SMC & 0.659(0.044) & 68.5(3.1) & \textbf{88.3(0.1)} & 84.1(1.0) & 53.7(4.3) \\
            TDS & 0.674(0.086) & 68.2(2.4) & 88.3(0.1) & 85.6(0.7) & 54.4(2.8) \\
            CFG & -1.186(0.035) & 11.0(0.4) & 65.7(0.0) & 7.8(0.1) & 0.0(0.0) \\
            \midrule
            \alg \ w/o KL & \textbf{1.108(0.004)} & \textbf{100.0(0.0)} & 73.4(0.2) & 40.4(0.2) & 40.4(0.2) \\
            \alg & 1.095(0.026) & 86.4(0.2) & 87.0(0.0) & 85.6(0.7) & \color{darkred}{\textbf{72.3(0.8)}} \\
            \bottomrule
        \end{tabular}
    \end{adjustbox}
\end{table}

\rebuttal{\textbf{Diversity of Generated Sequences.} We evaluate the diversity of the protein sequences generated by different methods using the average sequence entropy of each protein backbone in the test set. The results are shown in Table~\ref{table:protein.diversity}. \alg\ achieves a comparable level of diversity as the pretrained model, significantly outperforming SMC and TDS. This further justifies the efficacy of \alg\ to achieve the optimization objective of generating stable sequences, while maintaining high diversity. Among all baselines, \alg\ is the only method that have both a high success rate and high diversity.}

\begin{table}[!th]
    \centering
    \caption{\rebuttal{Average sequence entropy of the generated sequences on protein inverse folding.}}
    \label{table:protein.diversity}
    \small
    \setlength{\tabcolsep}{4pt}
    \renewcommand{\arraystretch}{1.08}
    \begin{adjustbox}{max width=0.95\textwidth}
        \begin{tabular}{l ccc}
            \toprule
            Method & Sequence Entropy\,$\uparrow$ \\
            \midrule
            Pretrained & \textbf{34.7(0.2)} \\
            CG & 34.6(0.1) \\
            SMC & 24.9(1.2) \\
            TDS & 24.9(0.5)  \\
            CFG & 8.4(0.1) \\
            \midrule
            \alg \ w/o KL & 25.7(0.1) \\
            \alg & 33.3(0.2) \\
            \bottomrule
        \end{tabular}
    \end{adjustbox}
\end{table}

\textbf{Additional Results.} We provide more examples of the generated proteins in \pref{fig:app.energy}, in addition to \pref{fig:whole_figure}. We also provide the specific values for energy, Pred-ddG and scRMSD of the visualized protein generated by \alg, as well as the energy values for the corresponding wild-type structure.

\begin{figure}[!ht]
    \centering
    \includegraphics[width=0.75\linewidth]{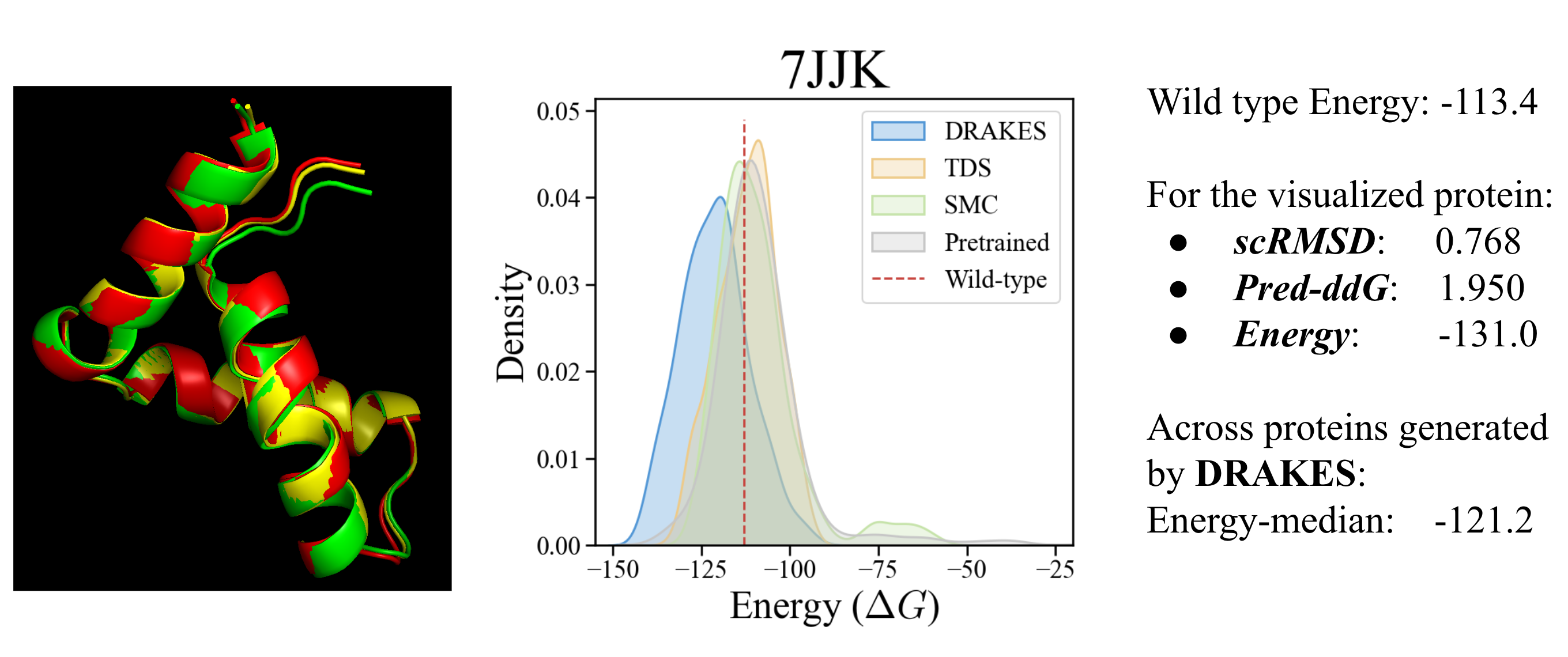}
    \includegraphics[width=0.75\linewidth]{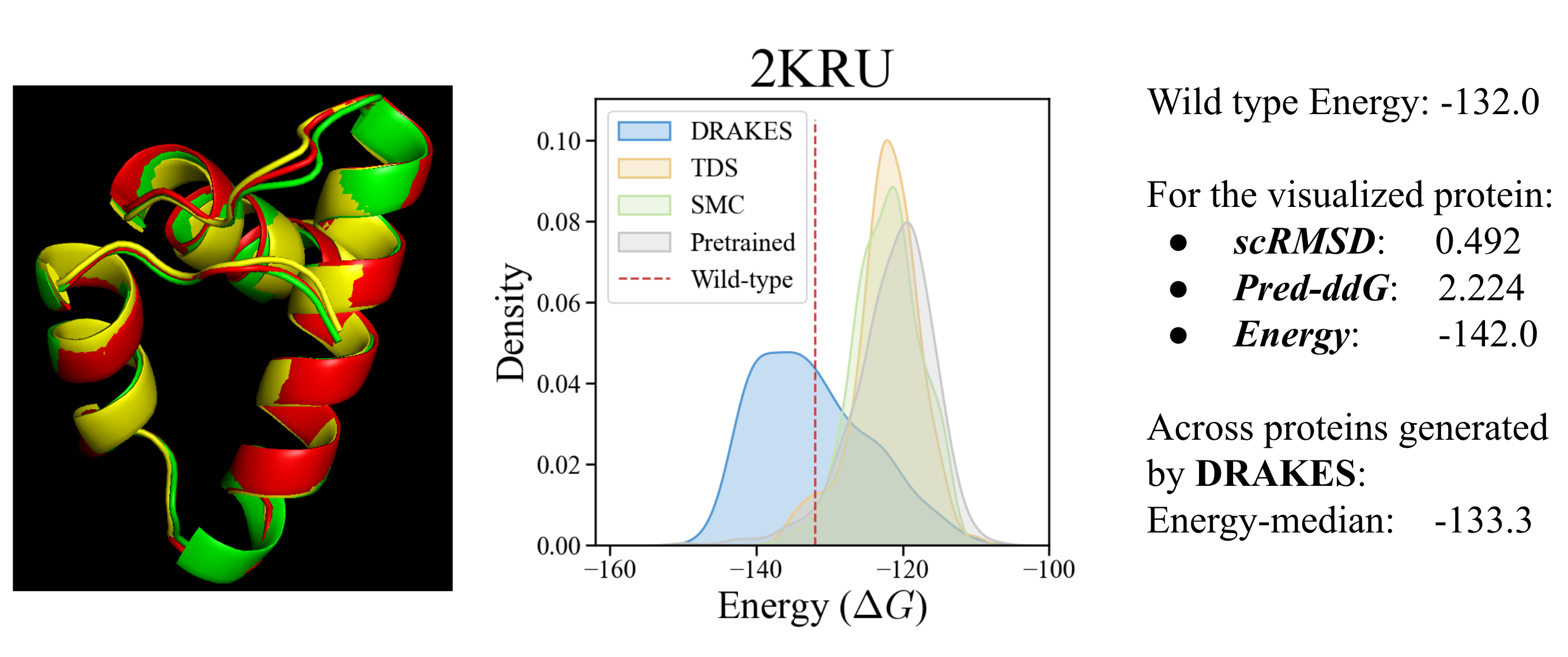}
    \includegraphics[width=0.75\linewidth]{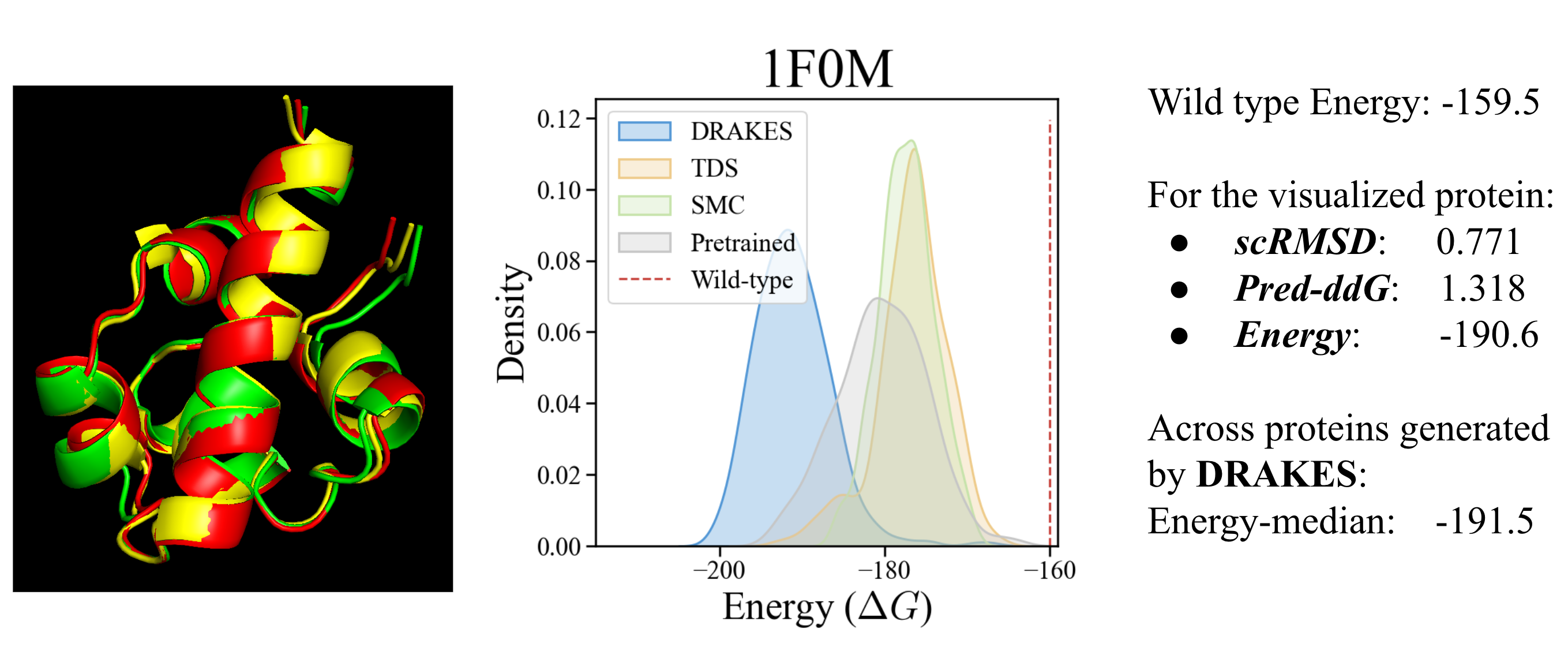}
    \includegraphics[width=0.75\linewidth]{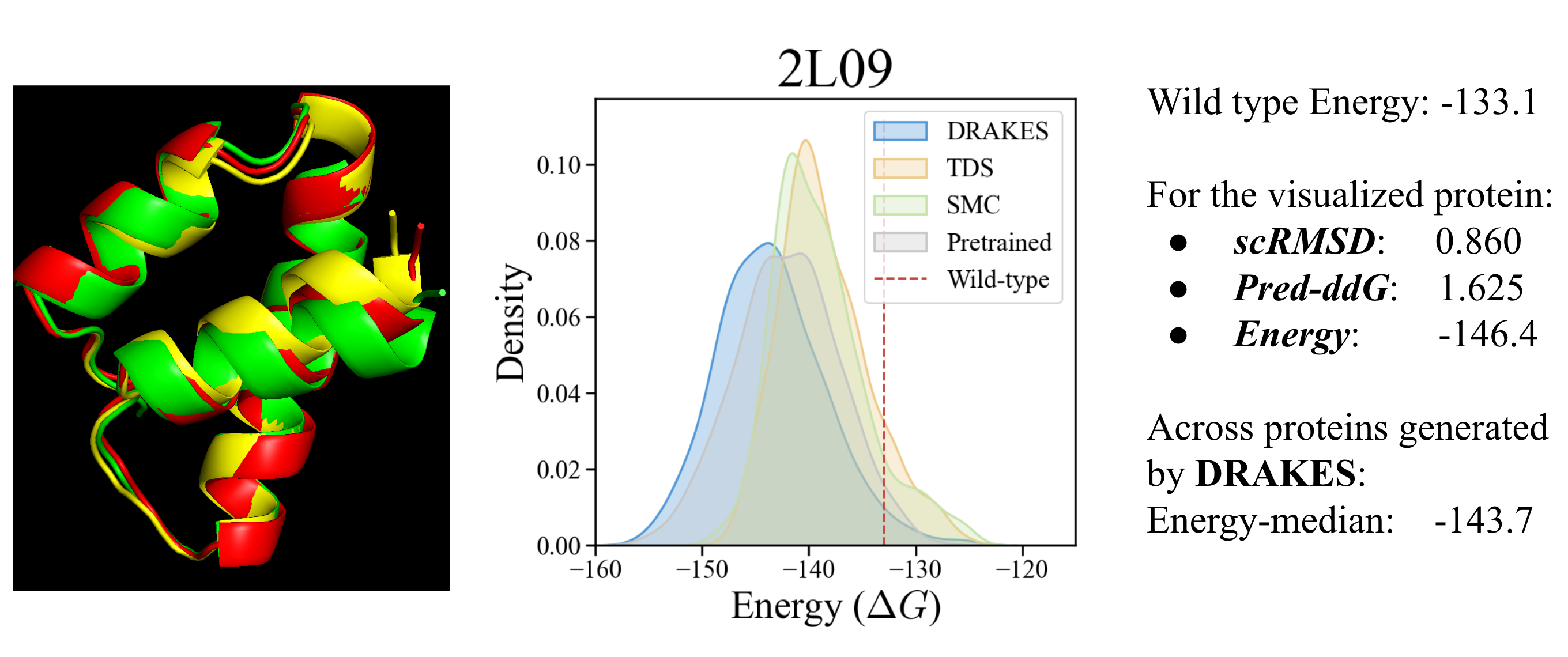}
    \includegraphics[width=0.75\linewidth]{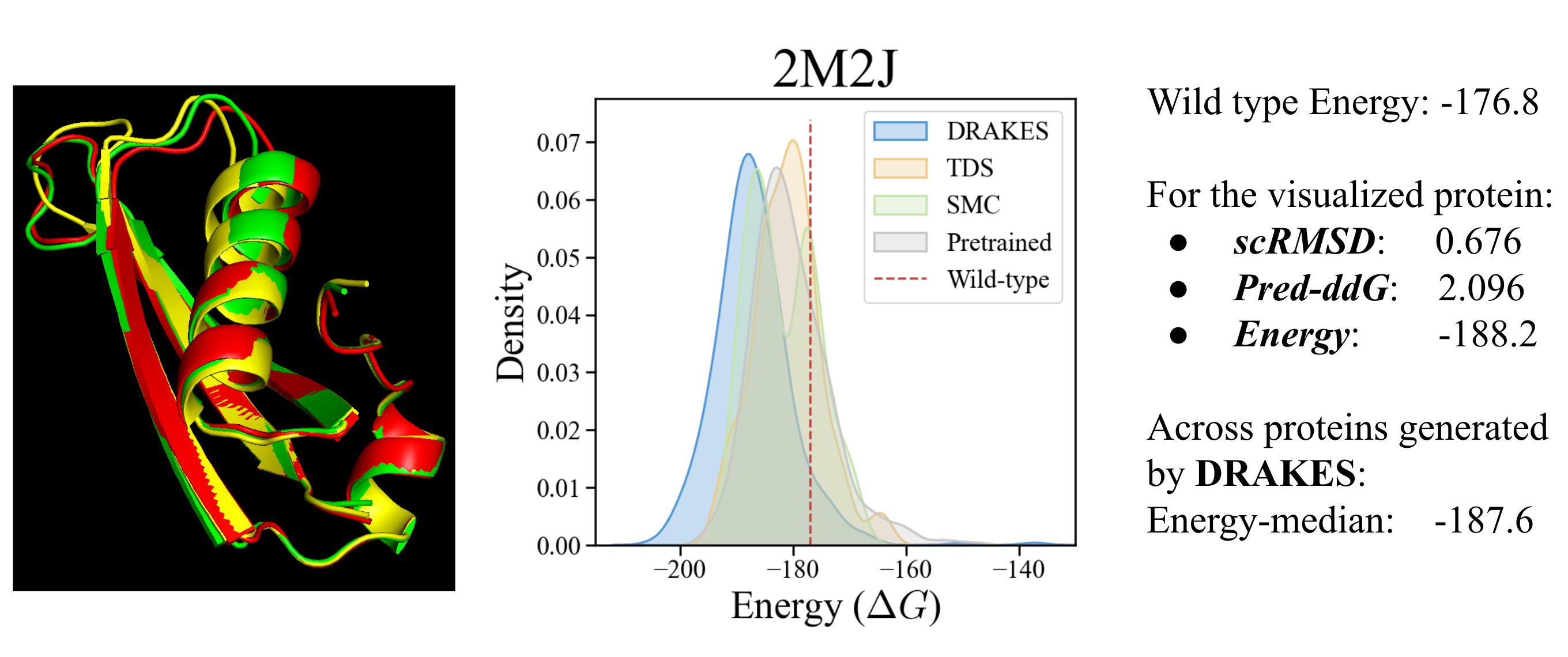}
    \caption{Additional examples of generated proteins.}
    \label{fig:app.energy}
\end{figure}

\clearpage
\subsection{Toxicity-Controlled Text Generation}
\label{app:exp.text}

To demonstrate the general applicability of \alg\ beyond its use in the biological domain, we conduct experiments in toxicity-controlled text generation. The corresponding details and results are presented below.

\textbf{Reward Oracle.} We follow the setting in \citet{zhao2024probabilistic}, using a toxicity prediction model trained by~\citet{nicholas22aira} as the reward oracle, which is a fine-tuned RoBERTa model with 125M parameters that can be used to score the toxicity of a sentence. 

\textbf{Pretrained Model.} We utilize the pretrained masked diffusion language model from~\citet{sahoo2024simple} with 130M non-embedding parameters, which is trained on the OpenWebText corpus~\citep{Gokaslan2019OpenWeb} for one million steps and involves the processing of 33 billion tokens. 

\textbf{Fine-tuning Setup.}
Following~\citep{zhao2024probabilistic}, we generate 20 output tokens. During \alg's stage 1 data collection, sequences are generated from the model over 20 diffusion steps. We set $\alpha=0.0001$ and truncate the backpropagation at step 5. The base Gumbel Softmax temperature is set to 1.0. The model is fine-tuned with batch size 128 (32 samples per iteration, with gradient accumulated over 4 iterations) for 100 steps. For \alg\ w/o KL, we follow the same setup, but set $\alpha$ to zero. For evaluation, we generation 320 sequences per method (with a batch size of 32 over 10 batches).  


\textbf{Results.} As shown in Table~\pref{table:text}, \alg\ generates sequences with the highest reward (i.e., low toxicity) compared to all baseline models, measured by the mean and median \textit{Pred-Score}. This demonstrates the effectiveness of \alg\ in the natural language domain.

\begin{table}[!th]
    \centering
    \caption{Model performance on toxicity controlled text generation.}
    \label{table:text}
    \small
    \setlength{\tabcolsep}{4pt}
    \renewcommand{\arraystretch}{1.08}
    \begin{adjustbox}{max width=0.95\textwidth}
        \begin{tabular}{l cccc}
            \toprule
            Method & Pred-Score (mean)\,$\uparrow$ & Pred-Score (median)\,$\uparrow$ \\
            \midrule
            Pretrained & 6.996 & 8.003  \\
            CG & 8.488 & 9.009 \\
            SMC & 9.286 & 9.367 \\
            TDS & 9.817 & 9.882 \\
            \alg & \textbf{10.044} & \textbf{10.049} \\
            \bottomrule
        \end{tabular}
    \end{adjustbox}
\end{table}

\end{document}